 \newtheorem{theorem}{Theorem}
 \newdefinition{rmk}{Remark}
 \newproof{pf}{Proof}
 \newproof{pot}{Proof of Theorem \ref{thm2}}
\def\tsc#1{\csdef{#1}{\textsc{\lowercase{#1}}\xspace}}
\begin{document}
\let\WriteBookmarks\relax
\def\floatpagepagefraction{1}
\def\textpagefraction{.001}
\shorttitle{Dynamic Motion/Force Control of Mobile Manipulators via Extended UDE
}
\shortauthors{S. Gao et~al.}

\title [mode = title]{Dynamic Motion/Force Control of Mobile Manipulators via Extended UDE
}

% \tnotemark[1,2]
% \tnotetext[1]{This document is the results of the research
   % project funded by the National Science Foundation.}
% \tnotetext[2]{The second title footnote which is a longer text matter
   % to fill through the whole text width and overflow into
   % another line in the footnotes area of the first page.}

\author[1]{Songqun Gao} [orcid=0000-0002-2434-8656]
\credit{Writing – original draft, Methodology, Software, Validation}

\author[1]{Wendi Ding}
\credit{Writing - review \& editing, Investigation, Methodology, Validation}

\author[2]{Maotong Cheng}
\credit{Writing - review \& editing, Methodology, Software, Validation}

\author[2]{Qinyuan Ren}
\cormark[1]
\credit{Supervision, Writing - review \& editing, Resources}
\ead{renqinyuan@zju.edu.cn}

\author[1]{Ben M. Chen}
\credit{Supervision, Resources, Conceptualization}

\affiliation[1]{organization={Department of Mechanical and Automation Engineering, The Chinese University of Hong Kong},
                % addressline={Ma Liu Shui, Shatin, N.T.},
                city={Hong Kong SAR},
                postcode={999077},
                country={China}
                }

\affiliation[2]{organization={College of Control Science and Engineering, Zhejiang University},
                % addressline={38 Zheda Rd.(Yuquan Campus)}, 
                city={Hangzhou},
                postcode={310027}, 
                country={China}}

\cortext[cor1]{Corresponding author}

% \nonumnote{This note has no numbers. In this work, demonstrate $a_b$  the formation Y\_1 of a new type of polariton on the interface between a cuprous oxide slab and a polystyrene micro-sphere placed on the slab.  }

\begin{abstract}
Mobile manipulators are known for their superior mobility over manipulators on fixed bases, offering promising applications in smart industry and housekeeping scenarios. The dynamic coupling nature between the mobile base and the manipulator presents challenges for force interactive tasks of the mobile manipulator. However, current strategies often fail to account for this coupling in such scenarios. To address this, this paper presents a dynamic coupling-integrated manipulator model that requires only the manipulator dynamics and the mobile base kinematics, which simplifies the modeling process. In addition, embedding the dynamic model, an extended uncertainty and disturbance estimator (UDE) is proposed for the mobile manipulator, which separately estimates the dynamic coupling terms and other unmodeled uncertainties, incorporating them into the feedforward and feedback control loops, respectively. The proposed approach increases the speed of response of the system and improves the dynamic robot-environment interaction (REI) performance of the mobile manipulator. A series of simulations and experiments of a wall-cleaning task are conducted to verify the effectiveness of the proposed approach. Ablation studies demonstrate that the proposed approach significantly improves the motion/force tracking performance when the mobile base is in dynamic motion. 
% \noindent\texttt{\textbackslash begin{abstract}} \dots 
% \texttt{\textbackslash end{abstract}} and
% \verb+\begin{keyword}+ \verb+...+ \verb+\end{keyword}+ 
% which contain the abstract and keywords respectively. 
% \noindent Each keyword shall be separated by a \verb+\sep+ command.
\end{abstract}

% \begin{graphicalabstract}
% \includegraphics[width=0.9\linewidth]{}
% \end{graphicalabstract}

% \begin{highlights}
% \item This paper considers the dynamic interaction task where the manipulator is required to perform accurate interaction control under the high-dynamic motion of the mobile base. To this end, to compensate for the dynamic motion of the mobile base, a dynamic coupling-integrated manipulator model is proposed. In this model, the base kinematics are incorporated into the manipulator dynamics, which streamlines the modeling process for control purposes and avoids the tedious whole-body dynamic modeling of mobile manipulators.
% \item Traditional UDE remains challenging to compensate for dynamic uncertainties with the high transient response. In this paper, to enhance the transient response of mobile manipulators under dynamic couplings, an extended UDE is proposed that separately estimates dynamic coupling terms and system disturbances, incorporating them into the feedforward and feedback control loops, respectively. 
% \item Finally, the proposed approach is implemented in a motion/force control frame and validated through a wall-cleaning task, a typical dynamic interaction scenario. 
% The effectiveness of the proposed method in compensating for dynamic couplings and uncertainties is verified through experiments. Furthermore, the limitation of the approach in compliant environment interaction is analyzed.
% \end{highlights}
\begin{keywords}
Mobile manipulator \sep Uncertainty disturbance estimator \sep Robot-environment interaction
\end{keywords}
\maketitle

\section{Introduction}
% 移动基座上集成的机械手，通常称为移动机械手，由于其出色的机动性和促进大规模交互的能力，近来越来越受欢迎。它有各种应用，如家政 \citep{wall1,humaninteract,arashi}、工业检查 \citep{aerialinspection1, aerialinspection2}、水下勘探 \citep{underwatergrasp1, underwatergrasp2, underwatergrasp3, underwatergrasp5}、矿井勘探 \citep{mineexplore1}、搜索和救援 \citep{SandR}、协作运输 \citep{coop1} 等等。%在大多数情况下，控制移动机械手的常见方法是独立移动移动基座和机械手，其中移动基座移动到所需位置，然后机械手执行交互任务。然而，这种方法大大降低了移动操作的灵活性和效率 \citep{gupta_timeoptimal}，并且无法扩展到非结构化环境 \citep{mm_coupling}。
The integration of manipulators on mobile bases, namely mobile manipulators, has recently gained popularity due to their excellent mobility and capabilities to facilitate large-scale interactions. Mobile manipulators have been adopted in various applications, such as housekeeping \citep{wall1,humaninteract}, industrial inspection \citep{aerialinspection1, aerialinspection2}, leak inspection \citep{mm_inspection}, painting \citep{painting}, collaborative transport \citep{coop1}, and so on.
In these applications, mobile manipulators are required to handle the coupled motion of mobile bases and manipulators to guarantee efficient and safe operations in complex environments \citep{gupta_timeoptimal, mm_coupling}. 
% For example, in manufacturing and maintenance tasks \citep{mm_inspection}, mobile manipulators are required to perform contact-based inspections for defect detection and quality control over large workspaces, efficiently covering and tracking the surfaces of large parts. Such tasks require mobility during the manipulator's operations. 
However, due to the rapid acceleration, deceleration, turning of the mobile base, and the unevenness of the ground, the mobile base transmits the nonlinear dynamic coupling and disturbances to the onboard manipulator, which generate undesired movement, leading to robot-environment interaction (REI) performance degradations \citep{mineexplore1} and potential collisions \citep{gupta_uncertain, aerial_compliant}. Therefore, ensuring stable, accurate REI of mobile manipulators while the mobile base is in high-dynamic motion remains a challenge.

Several studies on mobile manipulators compensate for dynamic coupling terms using model-based control approaches. 
The dynamic coupling terms have been modeled as part of onboard manipulator dynamics \citep{aerialmanipulate8}, whole-body dynamics \citep{iterlearn_1}, and variable inertia parameters of manipulators \citep{aerialmanipulate7}. However, model-based control relies heavily on the accuracy of the dynamic coupling model, which is difficult to obtain in real applications. To address this limitation, observers for uncertainty rejection become a promising solution when the dynamics of manipulators/mobile manipulators are partially known, for example, the disturbance estimator \citep{aerialmanipulate7}, adaptive neural network \citep{mm1}, dynamic compensator \citep{dc_cep}, disturbance observer \citep{wall1}, uncertainty and disturbance estimator (UDE) \citep{UDE19}, and so on. Observers can be integrated with robust control schemes, where the observer compensates for the unmodeled dynamics and the robust controller guarantees the stability of the system against disturbances. 
Moreover, observers can be further extended by considering the specific characteristics of uncertainties. For instance, UDE has been adapted for handling time-varying disturbances \citep{UDE_TV}, periodic disturbances \citep{UDE_wind_g}, and time-delay systems \citep{UDE_timedelay}. 
In addition, disturbance observers have also achieved effective results in dealing with system uncertainties and dynamic disturbances \citep{dob}. 
However, these strategies suffer from phase lag, which hinders effective real-time compensation for the dynamic coupling of mobile manipulators. 
To solve this, in \citep{mineexplore1}, to reduce material spillage of forklifts when operating on uneven terrain, a feedforward term is integrated with an $H_\infty$ controller to increase the transient response of the compliant forklift against the terrain-induced disturbance. Nevertheless, feedforward control is sensitive to noise and the dynamic coupling of mobile manipulators requires acceleration information from mobile bases, which is usually unavailable. 
Although the dynamic coupling of mobile manipulators has been extensively studied, ensuring stable and accurate REI of mobile manipulators under high-dynamic motion remains challenging.

In this paper, a control approach of extended UDE is proposed for dynamic interaction tasks as illustrated in Fig.\ref{mmplatform} (a) to deal with the dynamic coupling of mobile manipulators and improve the transient response of the manipulator under the high-dynamic motion of mobile bases. 
First, dynamic parameters of the mobile base, such as rotational inertia and friction coefficients, are not only difficult to measure but also vary with environment conditions. To solve this, a dynamic coupling-integrated model is developed, incorporating base kinematics into manipulator dynamics. This simplified model facilitates the design and implementation of the extended UDE and improves efficiency by avoiding the tedious modeling of whole-body dynamics. 
Second, to improve the transient response of mobile manipulators under dynamic couplings, an extended UDE is designed to estimate dynamic coupling terms and other unmodeled uncertainties and incorporate them into the feedforward and feedback control loops, respectively. The extended UDE requires only the velocity information of the mobile base, thereby eliminating the need for acceleration measurements. 
Finally, the proposed approach is applied to a wall-cleaning task, which is a typical application of dynamic interaction. Experimental results reveal that the proposed approach effectively estimates the dynamic coupling terms and improves the stability and accuracy of dynamic interaction. 

The contributions of this paper are concluded as follows:
\begin{enumerate}[\textbullet]
\item A dynamic coupling-integrated manipulator model is proposed in this study, which streamlines the modeling process for control purposes and avoids the tedious whole-body dynamic modeling of mobile manipulators.
\item Incorporated with the dynamic coupling model, an extended UDE is designed to improve the transient response of the end effector under the high-dynamic motion of the mobile base. 
\item The proposed approach is applied to a typical wall-cleaning task, and the effectiveness of the proposed method in compensating for dynamic coupling terms and uncertainties is verified through experiments. Furthermore, the limitation of the approach in compliant environment interaction is analyzed.

% The proposed approach is implemented in a motion/force control scheme and validated through a typical wall-cleaning task. 
\end{enumerate}

The rest of the paper is organized as follows:
Sec. II formulates the dynamic coupling-integrated manipulator model. Sec. III introduces the proposed dynamic coupling-integrated control approach through an extended UDE. Sec. IV demonstrates the effectiveness of the proposed control approach through simulations and experiments. Finally, Sec. V concludes this paper. 

\begin{figure}[!t]
	\centering
        \includegraphics[width=0.95\linewidth]{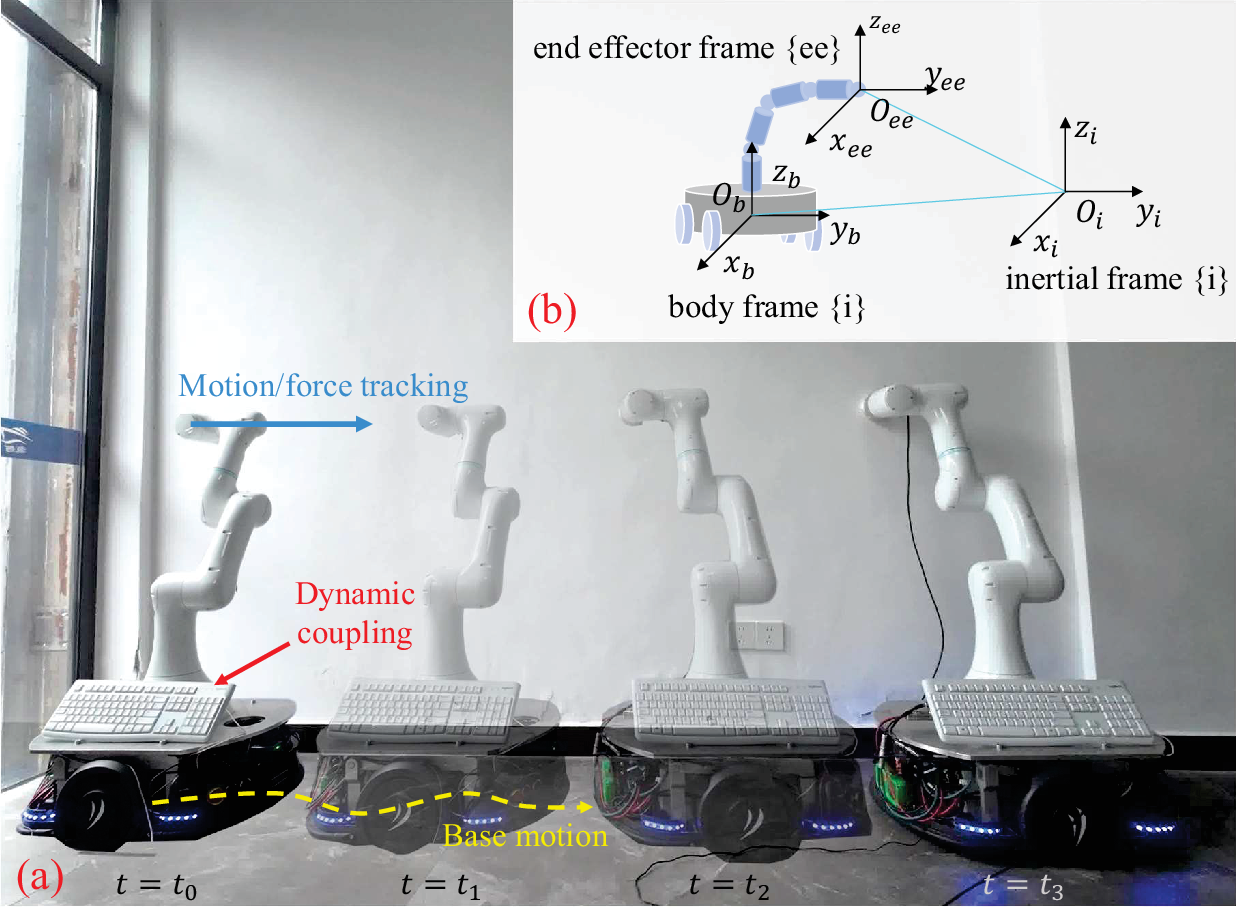}
	\caption{(a) Composite images of the wall-cleaning demonstration of the mobile manipulator at different time steps: $t_0,t_1,t_2$ and $t_3$. (b) The mobile manipulator platform is described using three frames of reference under the Cartesian coordinate system: the inertial frame $\{i\}$, the body frame $\{b\}$, and the end effector frame $\{ee\}$. }
	\label{mmplatform}
\end{figure}

\section{Dynamic Modeling of Manipulator on the Mobile Base}
\noindent Figure~\ref{mmplatform} (b) shows an $n$-degree of freedom (DOF) manipulator attached to a moving base. The body frame $\{b\}$ centers at the center of gravity (CG) of the mobile base, denoted by $O_b$, and the end effector frame $\{ee\}$ centers at the CG of the end effector of the manipulator, namely $O_{ee}$. The dynamic equation of the manipulator can be represented as:
\begin{equation}
    \begin{aligned}
    M_q \boldsymbol{\Ddot{q}} + C_q \boldsymbol{\dot{q}} + G_q =  \boldsymbol{\tau} + \hat{J}^T (\boldsymbol{f_e + f_d}), %+ \tau_{dist}
    \end{aligned} 
    \label{dynamics2}
\end{equation}
where $\boldsymbol{q} \in \mathbb{R}^n$ denotes the joint vector of the manipulator. $M_q$, $C_q$, $G_q$ represent the inertial, the Coriolis and centrifugal forces, and the gravity system matrices. $\boldsymbol{\tau} \in \mathbb{R}^n$ represents the input torque vector exerted on the joint. $\boldsymbol{f_e} = [f_x, f_y, f_z, f_m, f_n, f_k]^T\in \mathbb{R}^6$ and $f_d$ are expressed in $\{i\}$ and account for wrenches and disturbances due to contact with the environment. $\hat{J} =  \begin{bmatrix} {R^i_b} & \mathbf{0} \\ \mathbf{0} & {R^i_b} \end{bmatrix} J$ is the augmented Jacobian matrix, where $J$ represents the analytic Jacobian matrix of the manipulator and $R^i_b$ denotes the rotation matrix of frame $\{i\}$ with respect to $\{b\}$. % and $\tau_{dist}$ denotes other unmodeled model terms and external disturbances.
% the mobile manipulator moves in the inertial frame denoted by $\{i\}$, and 
The Cartesian coordinates of the mobile manipulator expressed in the inertial frame $\{i\}$ 
%with respect to expressed in $\{i\}$ 
are given by $\boldsymbol{X} = \begin{bmatrix} \boldsymbol{\eta} \\ \boldsymbol{x} \end{bmatrix} \in \mathbb{R}^{12}$, where $\boldsymbol{\eta} = \begin{bmatrix} \boldsymbol{p_\eta} \\ \boldsymbol{\Theta_\eta} \end{bmatrix} = [\eta_{1}, \eta_{2}, \eta_{3}, \eta_{4}, \eta_{5}, \eta_{6}]^T \in \mathbb{R}^{6}$ are the Cartesian coordinates of the mobile base and $\boldsymbol{x} = \begin{bmatrix} \boldsymbol{p_x} \\ \boldsymbol{\Theta_x} \end{bmatrix} = [x_{1}, x_{2}, x_{3}, x_{4}, x_{5}, x_{6}]^T \in \mathbb{R}^{6}$ are the Cartesian coordinates of the end effector of the manipulator.
%The mobile base moves at a speed of $\dot{P^i_{b/i}}$ with respect to $\{i\}$ expressed in $\{i\}$. 

\begin{figure*}[!ht]
	\centering
	\includegraphics[scale=0.5]{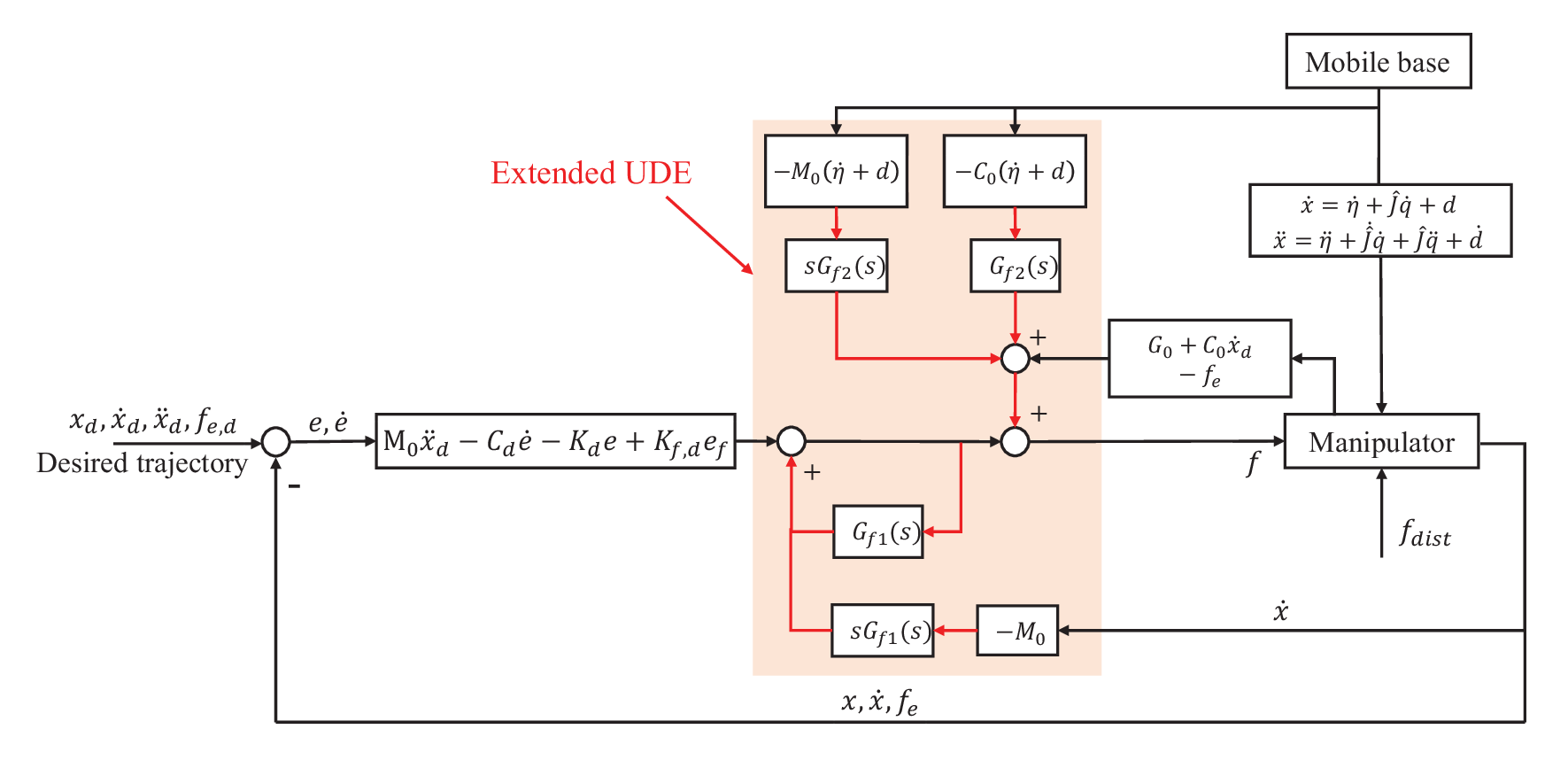}
	\caption{Control diagram of the dynamic motion/force control of mobile manipulators through extended UDE. In the extended UDE, the dynamic coupling term is compensated in the feedforward control loop, while other unmodeled system uncertainties are compensated in the feedback control loops. }
	\label{control flow}
\end{figure*}

%First, construct the NED and body frames. The position and attitude of the UUV $\eta = [ x,\ y,\ z,\ \phi,\ \theta,\ \psi ]^T \in \mathbb{R}^{3} \times S^3$ is considered in the NED frame, and the linear and angular velocity $\nu = [u,\ v,\ w,\ p,\ q,\ r ]^T \in \mathbb{R}^{6}$ is considered in the body frame, as shown in Fig.\ref{bodyframe}.
Let $\phi(t; 0, \boldsymbol{x_d}(0),  \boldsymbol{\dot{x}_d}(0), \boldsymbol{f_{e,d}}(0))$ represent the desired trajectory of the end effector of the manipulator during the time interval $[0, t]$ with an initial state given by $\boldsymbol{x_d}(0), \boldsymbol{\dot{x}_d}(0), \boldsymbol{f_{e,d}}(0)$. The task of the mobile manipulator is to track the motion/force trajectory $\phi(t; 0, \boldsymbol{x_d}(0), \boldsymbol{\dot{x}_d}(0)$, $\boldsymbol{f_{e,d}}(0))$ in Cartesian space for the end effector while maintaining the compliance of the end effector and compensating dynamic coupling terms and other unmodeled uncertainties during operation.

According to Fig. \ref{mmplatform} (b), the kinematic relation between $\eta$ and $x$ is derived as:
\begin{equation}
\begin{aligned}
\boldsymbol{p_x} = \boldsymbol{p_\eta} +& \boldsymbol{P^i_{ee/b}} = \boldsymbol{p_\eta} + R^i_b \boldsymbol{P^b_{ee/b}}, \\
\boldsymbol{R^i_{ee}} &= R^i_b \boldsymbol{R^b_{ee}},
\label{kine1}
\end{aligned} 
\end{equation}
where $R^i_b$, $R^b_{ee}$, $R^i_{ee}$ are the rotation matrices of frame $\{i\}$ with respect to $\{b\}$, $\{b\}$ with respect to $\{ee\}$, $\{i\}$ with respect to $\{ee\}$. $\boldsymbol{P^b_{ee/b}}$ denotes the position of the end effector with respect to the body frame $\{b\}$ expressed in $\{b\}$ and $\boldsymbol{P^i_{ee/b}}$ denotes the position of the end effector with respect to the body frame $\{b\}$ expressed in $\{i\}$. 
The derivative of (\ref{kine1}) is given by:
\begin{equation}
    \begin{aligned}
    \boldsymbol{\dot{p}_x }
%    &= \dot{p}_\eta + R^i_b \dot{P}^b_{ee/b} + \dot{R}^i_b P^b_{ee/b} \\
    & = \boldsymbol{\dot{p}_\eta} + R^i_b \boldsymbol{\dot{P}^b_{ee/b}} + \boldsymbol{\omega^i_{ee/b}} \times \boldsymbol{P^i_{ee/b}}, \\
        \boldsymbol{\omega^i_{ee/i}} &= \boldsymbol{\omega^i_{b/i}} + R^i_b \boldsymbol{\omega^b_{ee/b}},
    \end{aligned} 
\label{kine2}
\end{equation}
where $\boldsymbol{\omega^i_{b/b}}$ denotes the angular velocity of the end effector expressed in $\{b\}$, and $\times$ denotes the cross-product operator. 
% Take the derivative of (2) yields: 
% \begin{equation}
%     \begin{aligned}
%     \omega^i_{ee/i} = \omega^i_{b/i} + R^i_b \omega^b_{ee/b}.
%     \end{aligned} 
% \label{kine5}
% \end{equation}
The kinematic relation between $\dot\eta$ and $\dot x$ is then expressed as:
\begin{equation}
    \begin{aligned}
    \boldsymbol{\dot x} = \boldsymbol{\dot \eta} + \begin{bmatrix} {R^i_b} & \mathbf{0} \\ \mathbf{0} & {R^i_b} \end{bmatrix} \begin{bmatrix} \boldsymbol{\dot p^b_{ee/b}} \\ \boldsymbol{\omega^b_{ee/b}} \end{bmatrix} + \begin{bmatrix} \boldsymbol{\omega^i_{ee/b}} \times \boldsymbol{P^i_{ee/b}} \\ \boldsymbol{0} \end{bmatrix} .
    \end{aligned} 
\label{kine6}
\end{equation}

Consider the transformation of the manipulator between joint
space and Cartesian space: 
\begin{equation}
    \begin{aligned}
    \begin{bmatrix} \boldsymbol{\dot p^b_{ee/b}} \\ \boldsymbol{\omega^b_{ee/b}} \end{bmatrix}  = J(\boldsymbol{q}) \boldsymbol{\dot q}.
    \end{aligned} 
\label{jacobian}
\end{equation}
Substituting (\ref{jacobian}) into the kinematic equation (\ref{kine6}), the overall kinematic equation can be rewritten as:
\begin{equation}
    \begin{aligned}
    \boldsymbol{\dot x} = \boldsymbol{\dot \eta} + \hat{J} \boldsymbol{\dot q} + \boldsymbol{d},
    \end{aligned} 
\label{kine7}
\end{equation}
where $\boldsymbol{d} = \begin{bmatrix} \boldsymbol{\omega^i_{ee/b}} \times \boldsymbol{P^i_{ee/b}} \\ \boldsymbol{0} \end{bmatrix}$. The derivative of (\ref{kine7}) is further derived as:
\begin{equation}
    \begin{aligned}
    \boldsymbol{\Ddot x} = \boldsymbol{\Ddot \eta} + \dot{\hat{J}} \boldsymbol{\dot q} + \hat{J} \boldsymbol{\Ddot q} + \boldsymbol{\dot d}.
    \end{aligned} 
\label{kine8}
\end{equation}
%Note that UDE can estimate all unmodeled uncertainties in the next section. So, for simplicity, all variables that can not be obtained in real applications are treated as base-induced disturbances in this study. 

To directly control the motion and force of the end effector, according to the kinematic relationship (\ref{kine7}) and the derivative (\ref{kine8}), multiplying the pseudo-inverse and transpose
of the Jacobian matrix $(\hat{J}^{\dagger})^T$ on both sides of (\ref{dynamics2}) yields the task space dynamic coupling-integrated model of the manipulator on the moving base:
\begin{equation}
    \begin{aligned}
    M (\boldsymbol{\Ddot{x}} - \boldsymbol{\Ddot{\eta}} -\boldsymbol{\dot d} ) + C (\boldsymbol{\dot{x}}-\boldsymbol{\dot{\eta}} -\boldsymbol{d}) + G =  \boldsymbol{f + f_{e} + f_d},% + f_{dist},
    \end{aligned} 
    \label{dynamics2.5}
\end{equation}
where $M = (\hat{J}^\dagger)^T M_q \hat{J}^\dagger$ 
, $C =  (\hat{J}^\dagger)^T C_q \hat{J}^\dagger - (\hat{J}^\dagger)^T M_q \hat{J}^\dagger \dot{\hat{J}} \hat{J}^\dagger$, 
%= \begin{bmatrix}  H^T C_b H + H^T M_b \dot{H}&  H^T C_{bm} J^\dagger - H^T M_{bm}  J^\dagger \dot{J} J^\dagger \\ (J^\dagger)^T C_{mb} H + (J^\dagger)^T M_{mb} \dot{H} &  (J^\dagger)^T C_m J^\dagger - (J^\dagger)^T M_m J^\dagger \dot{J} J^\dagger \end{bmatrix} 
$G = (\hat{J}^\dagger)^T G_q$, and $f = (\hat{J}^\dagger)^T \boldsymbol{\tau}$. 
The system matrices can be separated into known and unknown matrices $M=M_0 + \Delta M$, $C=C_0 + \Delta C$, and $G=G_0 + \Delta G$, where $M_{0}$, $C_{0}$, $G_{0}$ are the known system matrices, $\Delta M$, $\Delta C$, $\Delta G$ denote the system's uncertainties.
Assume that the uncertainties are bounded:
\begin{equation}
\begin{aligned}
||\Delta M|| &= ||M_{0} - M|| \leq \delta M, \\
||\Delta C|| &= ||C_{0} - C|| \leq \delta C, \\
||\Delta G|| &= ||G_{0} - G|| \leq \delta D, \\
||\boldsymbol{f_d} || & \leq \delta f_d,
\end{aligned}
\label{uncertainty}
\end{equation}
where $\delta M$, $\delta C$, $\delta D$, $\delta f_d$ are the bounds of uncertainties. The dynamic model is then given by:
\begin{equation}
    \begin{aligned}
    M_0 \boldsymbol{\Ddot{x}}  + C_0 \boldsymbol{\dot{x}} + G_0 =  \boldsymbol{f + f_{e} + \mu_{c} + \mu_u},% + f_{dist},
    \end{aligned} 
    \label{dynamics3}
\end{equation}
where $\boldsymbol{\mu_c}$ denotes the dynamic coupling of the mobile base and $\boldsymbol{\mu_u}$ denotes other unmodeled uncertainties, with:
\begin{equation}
    \begin{split}
        \boldsymbol{\mu_c} &= M_0(\boldsymbol{\Ddot{\eta}} +\boldsymbol{\dot d}) + C_0 (\boldsymbol{\dot{\eta}}+\boldsymbol{d}),\\
        \boldsymbol{\mu_u} &= \boldsymbol{f_d} - \Delta M (\boldsymbol{\Ddot{x}}-\boldsymbol{\Ddot{\eta}}-\boldsymbol{\dot d}) - \Delta C (\boldsymbol{\dot{x}}-\boldsymbol{\dot{\eta}}-\boldsymbol{d}) - \Delta G.
    \end{split} 
    \label{disturbance}
\end{equation}

The dynamic coupling-integrated model (\ref{kine7}) and (\ref{dynamics3}) provide a direct means to design a Cartesian controller for the manipulator on the moving base.

% 在这里，我们专注于建模底盘对于机械臂的影响，因为机械臂的不确定性在REI过程中更加被放大。进一步的，我们简化了车臂之间的动态耦合，将base induced dynamic coupling建模在机械臂的动力学中

\section{Motion/Force Control Approach via Extended UDE}
\noindent The overall diagram of the proposed dynamic motion/force control approach is illustrated in Fig.~\ref{control flow}. The feedforward term $\boldsymbol{f_c}$ predicts the dynamic coupling terms, thereby improving the dynamic response of the REI performance, whereas the feedback term $\boldsymbol{f_{u}}$ maintains the system's stability against other unmodeled dynamics.

\subsection{Extended UDE Design}
To improve the disturbance rejection ability of the mobile manipulator system, an extended UDE is designed to compensate for dynamic coupling and unmodeled uncertainties between the mobile base and the manipulator.%, where UDE estimates model uncertainties and the other unmodeled disturbances through the system states and inputs \citep{UDE19}.

The mobile manipulator system is designed to exhibit the impedance behavior at the end effector \citep{MTI21}. Specifically, the desired impedance behavior is expressed in the form of: 
\begin{equation}
    \begin{aligned}
    M_d (\boldsymbol{\Ddot{x}}-\boldsymbol{\Ddot{x}_d}) + C_d \boldsymbol{\dot{e}} + K_d \boldsymbol{e} = K_{f,d} \boldsymbol{e_f},
    \end{aligned} 
    \label{impedancemodel0}
    \end{equation}
where $M_d$, $C_d$, $K_d$, $K_{f,d}$ represent the desired inertial, damping, stiffness, and force matrices. Motion and force error terms are defined as $\boldsymbol{e} = \boldsymbol{x}-\boldsymbol{{x}_d}$, $\boldsymbol{e_f} = \boldsymbol{f_e} - \boldsymbol{f_{e,d}}$. Without loss of generality, choosing $M_d = M_0$ yields:
\begin{equation}
    \begin{aligned}
    M_0 (\boldsymbol{\Ddot{x}}-\boldsymbol{\Ddot{x}_d}) + C_d \boldsymbol{\dot{e}} + K_d \boldsymbol{e} = K_{f,d} \boldsymbol{e_f}.
    \end{aligned} 
    \label{impedancemodel}
\end{equation}

Assume that the system dynamics and the unmodeled disturbance are bounded by a cutoff frequency $\omega_c$. Moreover, assume that $G_{f,1}(s)$ and $G_{f,2}(s)$ are ideal low-pass filters that have unit gains and zero phase shift when $\omega \leq \omega_c$ and zero gain when $\omega > \omega_c$. Under these conditions, uncertainties $\boldsymbol{\mu_c}$ and $\boldsymbol{\mu_u}$ can be estimated through $G_{f,1}(s)$ and $G_{f,2}(s)$ \citep{UDE19}:
\begin{equation}
    \begin{aligned}
    \boldsymbol{\hat{\mu}_c} &= \mathscr{L}^{-1} \{G_{f1}(s)\} \ast \boldsymbol{\mu_c}, \\
    \boldsymbol{\hat{\mu}_u} &= \mathscr{L}^{-1} \{G_{f2}(s)\} \ast \boldsymbol{\mu_u},
    \end{aligned} 
    \label{f4}
\end{equation}
where $\ast $ represents the convolution symbol and $\mathscr{L}^{-1}$ represents the inverse Laplace transform symbol, $G_{fi} = diag\{G_{fi}[1],\cdots,G_{fi}[6]\}, i = 1,2$.
This indicates that UDE can effectively estimate unmodeled uncertainties in the system, thus improving the disturbance rejection ability of the mobile manipulator system.

In order to improve the transient response of the mobile manipulator under dynamic coupling, an extended UDE term is proposed to compensate for the dynamic coupling term $\hat{\mu}_c$ in the feedforward control loop, while the feedback linearization term, $G_0 - \boldsymbol{f_e} + C_0 \boldsymbol{\dot{x}_d}$, is also considered to compensate for the gravity and external wrenches exerted on the manipulator and ensures the tracking of the desired impedance behavior. The feedforward control input $f_c$ is formulated as:
\begin{equation}
    \begin{split}
    \boldsymbol{f_{c}} &= -\mathscr{L}^{-1} \{G_{f1}(s)\} \ast \boldsymbol{\mu_c} +C_0 \boldsymbol{\dot{x}_d} + G_0 - \boldsymbol{f_{e}} \\
    &= -\mathscr{L}^{-1} \{sG_{f1}(s)\} \ast M_0(\boldsymbol{\dot{\eta}} +\boldsymbol{d}) \\
    &- \mathscr{L}^{-1} \{G_{f1}(s)\} \ast C_0 (\boldsymbol{\dot{\eta}}+\boldsymbol{d}) \\
    &+C_0 \boldsymbol{\dot{x}_d} + G_0 - \boldsymbol{f_{e}},
    \end{split}
    \label{f_ff} 
\end{equation}
which combines the compensation terms for the dynamic coupling terms, gravity, and external wrenches. Substituting (\ref{f_ff}) into the dynamics (\ref{dynamics3}) results in: 
\begin{equation}
    \begin{aligned}
    M_0 \boldsymbol{\Ddot{x}}  + C_0 \boldsymbol{\dot{e}} = \boldsymbol{f_u} + \boldsymbol{\mu_u},% + f_{dist},
    \end{aligned} 
    \label{ude}
\end{equation}
which indicates that all the unmodeled uncertainties along with tracking error $C_0 \boldsymbol{\hat {\dot e}}$ can be estimated through system states and the system outputs as follows: 
\begin{equation}
    \begin{aligned}
    \boldsymbol{\hat \mu_u} - C_0 \boldsymbol{\hat {\dot e}} =  \mathscr{L}^{-1} \{G_{f2}(s)\} \ast (M_0 \boldsymbol{\Ddot{x}} - \boldsymbol{f_u}).% + f_{dist},
    \end{aligned} 
    \label{muuhat}
\end{equation}

Substituting the desired impedance model (\ref{impedancemodel}), (\ref{f4}), and (\ref{muuhat}) into (\ref{ude}), the feedback control input can be derived as: 
\begin{equation}
    \begin{split}
        \boldsymbol{f_u} &= M_0 \boldsymbol{\Ddot{x}_d} - (C_d \boldsymbol{\dot{e}} + K_d \boldsymbol{e} - K_f \boldsymbol{e_f}) \\ 
        &- \mathscr{L}^{-1} \{G_{f2}(s)\} \ast (M_0 \boldsymbol{\Ddot{x}} - \boldsymbol{f_u}).
    \end{split} 
    \label{f_ude_filter}
\end{equation}
By rewriting (\ref{f_ude_filter}), the final control input $\boldsymbol{f_{u}}$ is derived as:
\begin{equation}
    \begin{split}
    \boldsymbol{f_{u}} 
      & = \mathscr{L}^{-1} \{\frac{1}{1-G_{f2}(s)} \} \ast (M_0 \boldsymbol{\Ddot{x}_d } - C_d \boldsymbol{\dot{e}} - K_d \boldsymbol{e} + K_{f,d} \boldsymbol{e_f)} \\
      & - \mathscr{L}^{-1} \{\frac{G_{f2}(s)}{1-G_{f2}(s)} \} \ast M_0 \boldsymbol{\Ddot{x}} \\
      & = \mathscr{L}^{-1} \{\frac{1}{1-G_{f2}(s)} \} \ast (M_0 \boldsymbol{\Ddot{x}_d} 
          - C_d \boldsymbol{\dot{e}} - K_d \boldsymbol{e} + K_{f,d} \boldsymbol{e_f}) \\
      & - \mathscr{L}^{-1} \{\frac{sG_{f2}(s)}{1-G_{f2}(s)} \} \ast M_0 \boldsymbol{\dot{x}}.
    \end{split}
    \label{f_UDE} 
\end{equation}
It is worth noting that the control approach through extended UDE (\ref{f_ff}) and (\ref{f_UDE}) relies solely on velocity rather than acceleration information, making it easier to implement in practice. In the proposed control approach, the extended UDE is utilized as a compensator within the controller, allowing it to estimate the dynamic coupling term in the feedforward loop and other disturbances by filtering system inputs and states. 

\subsection{Stability Analysis}
\begin{theorem}
\normalfont Suppose that the system's uncertainty and external disturbance are bounded, as shown in (\ref{uncertainty}), and that the system dynamics and disturbances are constrained below the given frequency $\omega_c$. Under the control law given by (\ref{f_all}):
\begin{equation}
    \begin{aligned}
    \boldsymbol{f} = \boldsymbol{f_{c}} + \boldsymbol{f_{u}}, % + f_{PP}.
    \end{aligned}
    \label{f_all} 
\end{equation}
the manipulator system on the mobile base (\ref{dynamics3}) is globally asymptotically stable in the full motion control mode. Furthermore, it remains stable and achieves the desired hybrid impedance model (\ref{impedancemodel}) in motion/force control mode, provided that there exist 
% $\boldsymbol{\delta_c} = diag\{\delta_{c1}, \delta_{c2}, \delta_{c3}, \delta_{c4}, \delta_{c5}, \delta_{c6}\}$, 
$\boldsymbol{\delta_u} = diag\{\delta_{u1}, \delta_{u2}, \delta_{u3}, \delta_{u4}$, $\delta_{u5}, \delta_{u6}\} \succ {0}$, and $\hat C_d \in \mathbb{R}^{6\times6} \succ {0}$ such that the following conditions hold:
\begin{equation}
    \begin{aligned}
 % \mathscr{L}^{-1} \{1-G_{f1}(s)\} \ast \boldsymbol{\mu_c} ||\boldsymbol{\dot{e}}|| &\leq \boldsymbol{\delta_c} ||\boldsymbol{\dot{e}}||^2 ,\\  
    |\boldsymbol{\dot{e}}^T \boldsymbol{\mu_u}| & \leq\boldsymbol{\dot{e}} ^T \boldsymbol{\delta_u} \boldsymbol{\dot{e}} ,\\
    \hat C_d &= C_d % - \boldsymbol{\delta_c} 
    - \boldsymbol{\delta_u} \succ {0}.
    \end{aligned}
    \label{const}
\end{equation}
\end{theorem}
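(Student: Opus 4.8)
The plan is to substitute the two control laws into the model, read off the closed‑loop error dynamics, and then run an energy‑based Lyapunov argument in which the two conditions in (\ref{const}) are precisely what makes the Lyapunov derivative negative semidefinite.

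First I would compute the closed loop. Substituting $\boldsymbol f=\boldsymbol{f_c}+\boldsymbol{f_u}$ from (\ref{f_all}) into (\ref{dynamics3}) and using the ideal low‑pass property of $G_{f1}$ (so that $\mathscr{L}^{-1}\{G_{f1}\}\ast\boldsymbol{\mu_c}=\boldsymbol{\mu_c}$ on the band $\omega\le\omega_c$), the feedforward term (\ref{f_ff}) cancels $\boldsymbol{\mu_c}$, $G_0$ and $\boldsymbol{f_e}$ and converts $C_0\boldsymbol{\dot{x}}$ into $C_0\boldsymbol{\dot{e}}$, which is exactly (\ref{ude}). Inserting the feedback law in its implicit form (\ref{f_ude_filter}) together with the UDE identity (\ref{muuhat})---on $\omega\le\omega_c$ the filtered signal $\mathscr{L}^{-1}\{G_{f2}\}\ast(M_0\boldsymbol{\ddot{x}}-\boldsymbol{f_u})$ equals $\boldsymbol{\mu_u}-C_0\boldsymbol{\dot{e}}$ by (\ref{ude})---the feedback contribution reduces to $M_0\boldsymbol{\ddot{x}_d}-C_d\boldsymbol{\dot{e}}-K_d\boldsymbol{e}+K_{f,d}\boldsymbol{e_f}$ plus the exact compensation of $\boldsymbol{\mu_u}-C_0\boldsymbol{\dot{e}}$, so collecting terms leaves the error system $M_0\boldsymbol{\ddot{e}}+C_d\boldsymbol{\dot{e}}+K_d\boldsymbol{e}=K_{f,d}\boldsymbol{e_f}+\boldsymbol{r}$, where $\boldsymbol{r}$ is the finite‑bandwidth estimation residual and $\boldsymbol{r}\equiv\boldsymbol{0}$ under the idealizing assumptions.

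For the full motion control mode there is no force channel, $\boldsymbol{e_f}\equiv\boldsymbol{0}$ and $\boldsymbol{r}\equiv\boldsymbol{0}$, so the error system is the linear, Hurwitz system $M_0\boldsymbol{\ddot{e}}+C_d\boldsymbol{\dot{e}}+K_d\boldsymbol{e}=\boldsymbol{0}$ with $M_0,C_d,K_d\succ 0$, which is globally exponentially---hence globally asymptotically---stable, certified by $V=\tfrac{1}{2}\boldsymbol{\dot{e}}^TM_0\boldsymbol{\dot{e}}+\tfrac{1}{2}\boldsymbol{e}^TK_d\boldsymbol{e}$. For the motion/force control mode I would retain $\boldsymbol{r}$ (lumped into $\boldsymbol{\mu_u}$) and augment $V$ with a force‑error potential: modeling the contact along the force‑regulated axes by an environment stiffness makes $\boldsymbol{e_f}$ affine in $\boldsymbol{e}$, so $K_{f,d}\boldsymbol{e_f}$ folds into an enlarged positive‑definite stiffness and the equilibrium of the error system coincides with that of (\ref{impedancemodel}), realizing the commanded steady‑state contact force. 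Differentiating $V$ along the error system and using the structural identity that $\dot{M}_0-2C_0$ is skew‑symmetric (or that the desired inertia $M_0$ is taken constant) to eliminate the inertia‑derivative term gives $\dot{V}=-\boldsymbol{\dot{e}}^TC_d\boldsymbol{\dot{e}}+\boldsymbol{\dot{e}}^T\boldsymbol{\mu_u}$; the first inequality in (\ref{const}) bounds $\boldsymbol{\dot{e}}^T\boldsymbol{\mu_u}$ by $\boldsymbol{\dot{e}}^T\boldsymbol{\delta_u}\boldsymbol{\dot{e}}$, whence $\dot{V}\le-\boldsymbol{\dot{e}}^T(C_d-\boldsymbol{\delta_u})\boldsymbol{\dot{e}}=-\boldsymbol{\dot{e}}^T\hat{C}_d\boldsymbol{\dot{e}}\le 0$ by the second condition $\hat{C}_d\succ 0$. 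Boundedness of $\boldsymbol{e},\boldsymbol{\dot{e}}$ and a LaSalle/Barbalat argument then give stability of the interconnection and convergence to the desired hybrid impedance behavior.

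I expect the main obstacle to be making the first step rigorous: because $\boldsymbol{\mu_u}$ in (\ref{disturbance}) itself contains $\boldsymbol{\ddot{x}}$, the UDE feedback is an algebraic loop, so the ``ideal filter cancels everything'' picture has to be replaced by a well‑posedness argument plus an explicit bound on the remaining residual, and one must then verify that this residual is genuinely dominated by the quadratic form $\boldsymbol{\dot{e}}^T\boldsymbol{\delta_u}\boldsymbol{\dot{e}}$ postulated in (\ref{const}). The secondary issues---handling $\dot{M}_0$ when the task‑space inertia is configuration dependent, and choosing the environment model so that the augmented $V$ stays positive definite while $\dot{V}$ remains sign definite---need care but are routine by comparison.
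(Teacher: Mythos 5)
Your proposal is correct and its core is the same as the paper's argument: the identical Lyapunov candidate $V=\tfrac{1}{2}\boldsymbol{\dot{e}}^TM_0\boldsymbol{\dot{e}}+\tfrac{1}{2}\boldsymbol{e}^TK_d\boldsymbol{e}$, cancellation of $\boldsymbol{\mu_c}$, $G_0$, $\boldsymbol{f_e}$ by the feedforward law and of the filtered uncertainty by the UDE feedback, leaving $\dot{V}=\boldsymbol{\dot{e}}^T(-C_d\boldsymbol{\dot{e}}+K_{f,d}\boldsymbol{e_f}-\boldsymbol{\mu_u})$, after which the two conditions in (\ref{const}) give $\dot{V}\le-\boldsymbol{\dot{e}}^T\hat{C}_d\boldsymbol{\dot{e}}+\boldsymbol{\dot{e}}^TK_{f,d}\boldsymbol{e_f}$ and LaSalle yields global asymptotic stability in full motion mode. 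Where you diverge is the motion/force mode: the paper stops at the dissipation inequality $V(t)\le V(0)+\int_0^t\boldsymbol{\dot{e}}^TK_{f,d}\boldsymbol{e_f}\,dt$ and concludes stability from dissipativity with supply rate $\boldsymbol{\dot{e}}^TK_{f,d}\boldsymbol{e_f}$, whereas you introduce an explicit environment-stiffness model to make $\boldsymbol{e_f}$ affine in $\boldsymbol{e}$, fold $K_{f,d}\boldsymbol{e_f}$ into an augmented positive-definite potential, and argue convergence to the impedance equilibrium. Your route proves a stronger conclusion (convergence, not just dissipativity) but at the cost of an environment model the theorem does not assume; the paper's route is weaker but assumption-free, and in fact matches what the theorem literally claims (``remains stable and achieves the desired hybrid impedance model''). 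Your flagged concerns are well placed: the algebraic loop in the UDE (since $\boldsymbol{\mu_u}$ contains $\boldsymbol{\ddot{x}}$) and the disappearance of the $\tfrac{1}{2}\dot{M}_0\boldsymbol{\dot{e}}$ term between (\ref{lyapunov_dot2}) and (\ref{lyapunov_dot4}) are both glossed over in the paper, and your proposed fixes (well-posedness plus absorbing the residual into the $\boldsymbol{\delta_u}$ bound; skew-symmetry of $\dot{M}_0-2C_0$ or constant $M_0$) are the right ones.
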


\begin{proof}
Consider a Lyapunov candidate:
\begin{equation}
    \begin{aligned}
    V = \frac{1}{2}\boldsymbol{\dot{e}}^T M_0 \boldsymbol{\dot{e}} + \frac{1}{2}\boldsymbol{e}^T K_d \boldsymbol{e}.
    \end{aligned}
    \label{lyapunov} 
\end{equation}
The time derivative of the Lyapunov candidate $V$ is given by:
\begin{equation}
    \begin{split}
    \dot{V} & = \boldsymbol{\dot{e}}^T \left( M_0 \boldsymbol{\Ddot{e}} + \frac{1}{2} \dot{M}_0 \boldsymbol{\dot{e}} \right) + \boldsymbol{{e}}^T  K_d \boldsymbol{\dot{e}}. % \left(+ \frac{1}{2}\dot{K}_d  {e} \right).
    \end{split}
    \label{lyapunov_dot} 
\end{equation}

Substituting the dynamic model of the manipulator on the moving base (\ref{dynamics3}) into (\ref{lyapunov_dot}) results in the following:
\begin{equation}
    \begin{split}
    \dot{V} &= 
            \boldsymbol{\dot{e}}^T ( 
            % M_0 (\boldsymbol{\Ddot{\eta}} + \boldsymbol{\dot d}) + C_0 (\boldsymbol{\dot{\eta}} + \boldsymbol{d}) 
            -C_0 \boldsymbol{\dot{x}} - G_0 + \boldsymbol{f_e} + \boldsymbol{f} + \boldsymbol{\mu_c} + \boldsymbol{\mu_u}\\ 
            & - M_0 \boldsymbol{\Ddot{x}_d} + \frac{1}{2} \dot{M}_0 \boldsymbol{\dot{e}} ) %+ + f_{dist}
             +\boldsymbol{ {e}}^T  K_d \boldsymbol{\dot{e}}.
            % & - \boldsymbol{\dot{e}}^T\left(\Delta M (\boldsymbol{\Ddot{x}}-\boldsymbol{\Ddot{\eta}}-\boldsymbol{\dot d}) + \Delta C (\boldsymbol{\dot{x}}-\boldsymbol{\dot{\eta}}-\boldsymbol{d}) + \Delta G - \boldsymbol{f_d} \right). %\left( + \frac{1}{2}\dot{K}_d  {e} \right).
    \end{split}
    \label{lyapunov_dot2} 
\end{equation}

Substituting control law $f$ (\ref{f_ff}), (\ref{f_ude_filter}), and (\ref{f_all}) into (\ref{lyapunov_dot2}) produces:
\begin{equation}
    \begin{split}
    \dot{V} & = \boldsymbol{\dot{e}}^T ( - C_d \boldsymbol{\dot{e}} + K_{f,d} \boldsymbol{e_f} - \boldsymbol{\mu_{u}}). 
    \end{split}
    \label{lyapunov_dot4} 
\end{equation}

Substituting (\ref{const}) into (\ref{lyapunov_dot4}), the following relation can be obtained:
\begin{equation}
    \begin{split}
        \dot{V} & \leq - \boldsymbol{\dot{e}}^T C_d \boldsymbol{\dot{e}} + \boldsymbol{\dot{e}}^T K_{f,d} \boldsymbol{e_f} +|\boldsymbol{\dot{e}}^T \boldsymbol{\mu_{u}}| \\
        & \leq - \boldsymbol{\dot{e}}^T \hat{C}_d \boldsymbol{\dot{e}} + \boldsymbol{\dot{e}}^T K_{f,d} \boldsymbol{e_f} \leq \boldsymbol{\dot{e}}^T K_{f,d} \boldsymbol{e_f}.
    \label{passivity}
    \end{split}
\end{equation}

From (\ref{passivity}), it follows that when the mobile manipulator is in full motion control mode, $\dot{V} = -\boldsymbol{\dot{e}}^T \hat{C}_d \boldsymbol{\dot{e}}$, which is negative semi-definite. According to LaSalle's invariance principle, the system is globally asymptotically stable. When the system is in motion/force control mode, the dissipation equality can be derived from (\ref{passivity}) \citep{dissipative}:
\begin{equation}
    \begin{split}
        V(t) \leq V(0) + \int_0^t(\boldsymbol{\dot{e}}^T K_{f,d} \boldsymbol{e_f})dt,
    \label{dissipation}
    \end{split}
\end{equation}
which indicates that the closed-loop system (\ref{dynamics3}) (\ref{impedancemodel}) (\ref{f_all}) is dissipative and ensures a stable behavior. 
%Furthermore, the desired impedance model (\ref{impedancemodel}) can be adjusted online while guaranteeing a passive system. 
This completes the proof.
\end{proof}

\subsection{Dynamic Performance Analysis}
To evaluate the transient performance of the proposed extended UDE, this section compares the traditional UDE with extended UDE.

The traditional UDE, generally described as (\ref{f_ff}), considers the feedforward terms $C_0 \boldsymbol{\dot{x}_d} + G_0 - \boldsymbol{f_{e}}$. Then, considering the manipulator dynamics (\ref{dynamics3}), the dynamic coupling term $\mu_c$ is also compensated in the feedback loop as follows: 
\begin{equation}
    \begin{aligned}
    \boldsymbol{\hat \mu_c + \hat \mu_u + C_0 \dot e} =  \mathscr{L}^{-1} \{G_{f}(s)\} \ast (M_0 \boldsymbol{\Ddot{x}} - \boldsymbol{f_{UDE}}).% + f_{dist},
    \end{aligned} 
    \label{mu_ude_traditional}
\end{equation}

To follow the desired impedance behavior (\ref{impedancemodel}), the overall control law can be expressed as:
\begin{equation}
    \begin{split}
        \boldsymbol{f_{UDE}} &= M_0 \boldsymbol{\Ddot{x}_d} - (C_d \boldsymbol{\dot{e}} + K_d \boldsymbol{e} - K_f \boldsymbol{e_f}) \\
        &- \mathscr{L}^{-1} \{G_{f}(s)\} \ast (M_0 \boldsymbol{\Ddot{x}} - \boldsymbol{f_{UDE}})\\
        &+ C_0 \boldsymbol{\dot{x}_d} + G_0 - \boldsymbol{f_{e}}. 
    \end{split} 
    \label{f_ude_traditional}
\end{equation}
In real applications, the controller is discrete-time with a sampling period $T_s$. At $t= kT_s$, the discrete-time UDE controller can be represented by: 
\begin{equation}
    \begin{split}
        \boldsymbol{f_{UDE}(k)} &= M_0 \boldsymbol{\Ddot{x}_d(k)} - (C_d \boldsymbol{\dot{e}(k)} + K_d \boldsymbol{e}(k) - K_f \boldsymbol{e_f}(k)) \\
        &- \mathscr{L}^{-1} \{G_{f}(s)\} \ast (M_0 \boldsymbol{\Ddot{x}(k-1)} - \boldsymbol{f_{UDE}(k-1)})\\
        &+ C_0 \boldsymbol{\dot{x}_d(k)} + G_0 - \boldsymbol{f_{e}(k)}. 
    \end{split} 
    \label{f_ude_discrete_traditional}
\end{equation}
The system uncertainties (\ref{mu_ude_traditional}) at $t=kT_s$ are estimated based on the inputs and states of the system at $t=(k-1)T_s$. Due to time-step delays, UDE gradually reduces system errors, which results in a slower dynamic response. 

In contrast, the control input for the extended UDE (\ref{f4}) is given by:
\begin{equation}
    \begin{split}
        \boldsymbol{f_{UDE}(k)} &= M_0 \boldsymbol{\Ddot{x}_d(k)} - (C_d \boldsymbol{\dot{e}(k)} + K_d \boldsymbol{e}(k) - K_f \boldsymbol{e_f}(k)) \\
        &- \mathscr{L}^{-1} \{G_{f}(s)\} \ast (M_0 \boldsymbol{\Ddot{x}(k-1)} - \boldsymbol{f_{UDE}(k-1)})\\
        &-\mathscr{L}^{-1} \{sG_{f1}(s)\} \ast M_0(\boldsymbol{\dot{\eta}(k)} +\boldsymbol{d(k)}) \\
    &- \mathscr{L}^{-1} \{G_{f1}(s)\} \ast C_0 (\boldsymbol{\dot{\eta}(k)}+\boldsymbol{d(k)}) \\
        &+ C_0 \boldsymbol{\dot{x}_d(k)} + G_0 - \boldsymbol{f_{e}(k)},
    \end{split} 
    \label{f_extendude_discrete_traditional}
\end{equation}
where dynamic coupling term $\boldsymbol{\mu_b}$ is explicitly compensated 
% using the proposed dynamic coupling models$-\mathscr{L}^{-1} \{sG_{f1}(s)\}$ $\ast M_0(\boldsymbol{\dot{\eta}(k)} +\boldsymbol{d(k)}) - \mathscr{L}^{-1} \{G_{f1}(s)\} \ast C_0 (\boldsymbol{\dot{\eta}(k)}+\boldsymbol{d(k)})$ 
in the feedforward loop to improve the speed of response of the system, while the feedback loop compensates for other unmodeled uncertainties. 

\section{Simulation and Experiment}
\subsection{Simulation Configuration}
To evaluate the effectiveness and dynamic performance of the proposed dynamic coupling model and extended UDE, a mobile manipulator platform is built in the Gazebo simulation environment. The system consists of a UR5e robotic arm mounted on a MiR-100 mobile base. All controllers operate at $125$ Hz in the Robot Operating System (ROS). The implementation is publicly available\footnote{https://https://github.com/SwonGao/Mobile-Manipulator-Simuator.}.  

Simulation studies are conducted to validate the dynamic coupling model and evaluate the performance of the motion/force control approach via extended UDE under a straight-line low-dynamic motion case and a high-dynamic motion case, respectively. 

Four control approaches are selected for ablation studies, namely C1, C2, C3, and C4. 
\begin{itemize}
    \item C1: the proposed control approach, impedance control (IC) incorporated with extended UDE and the dynamic-coupling integrated model;
    \item C2: IC incorporated with UDE and the dynamic-coupling integrated model;
    \item C3: IC incorporated with UDE;
    \item C4: IC.
\end{itemize}
For controller C1, $G_{f1}$ are selected as second-order low-pass filters, and $G_{f2}$ are selected as first-order low-pass filters, where $G_{f1}[i] = \frac{108s}{s^2+8.485s+36}, i = 1,\cdots,6$, $G_{f2}[i] = \frac{\omega_{ci}}{s+\omega_{ci}}$, and $\omega_c$ is selected as $[6,6,6,3,3,3]^T$.
The impedance parameters are initialized as $ K_d = diag\{200, 200, 200, 20$, $20$, $20\}$, $D_d = diag\{2, 2, 2, 1, 1, 1\}$, $K_{f,d} = diag\{0, 5, 0, 0, 0, 0\}$. For controllers C2, C3, and C4, the control parameters remain the same as C1. 

\subsection{Validation of Dynamic Coupling Terms}
\begin{figure} 
    \centering
  \subfloat[\label{veri1}]{%
       \includegraphics[width=\linewidth]{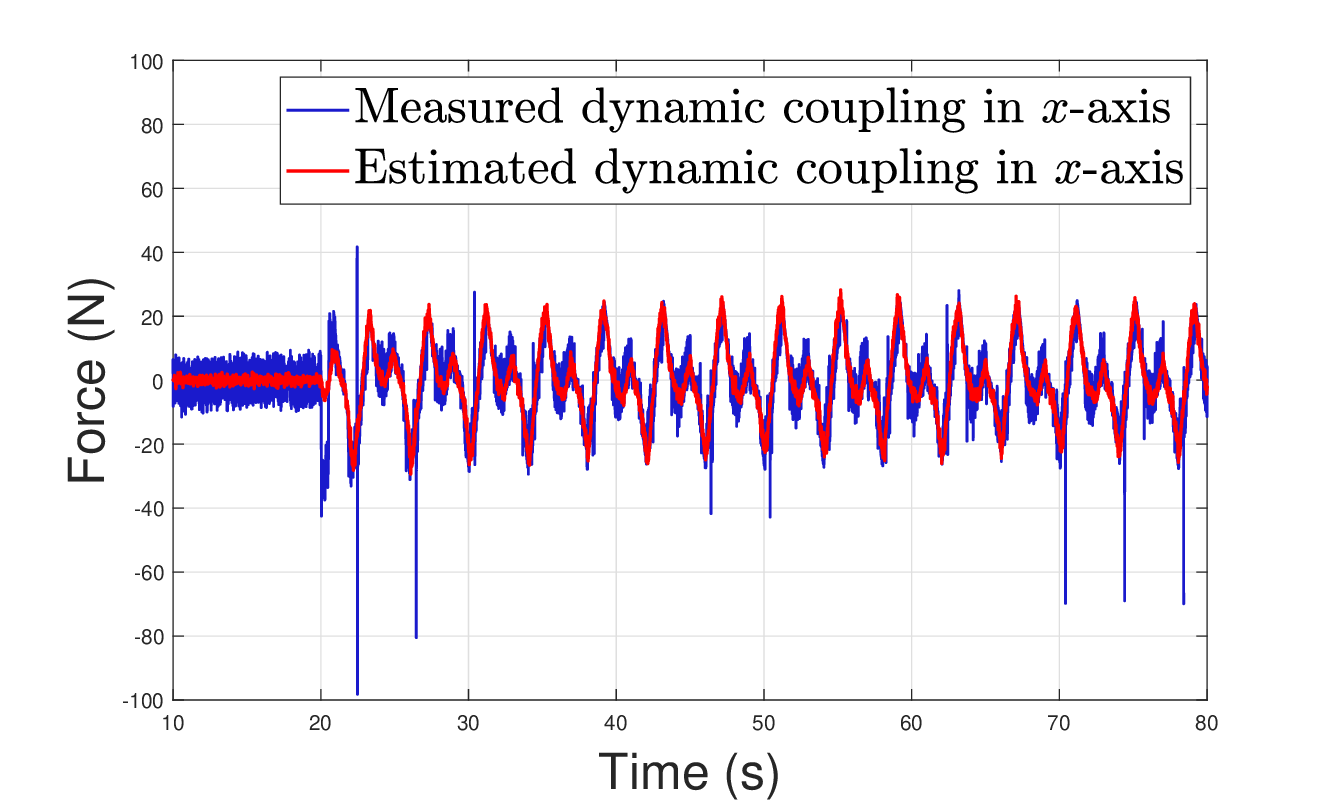}}
    \\ 
  \subfloat[\label{veri2}]{%
        \includegraphics[width=\linewidth]{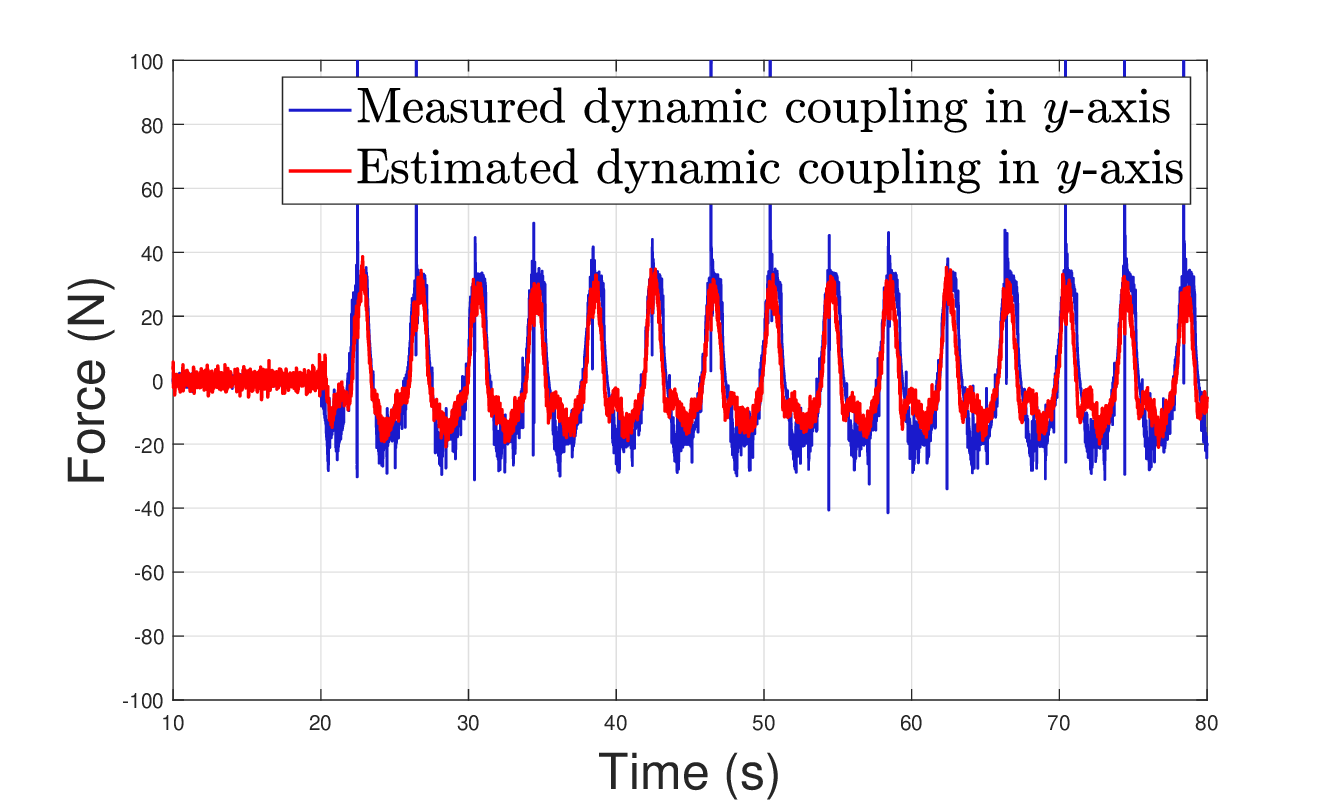}}
  \caption{Comparison of measured dynamic coupling terms in $x$-axis and $y$-axis and estimation using the dynamic coupling model.}
  \label{verification} 
\end{figure}
To validate the dynamic coupling model, the mobile base is assigned to move a high-dynamic motion trajectory, and the force sensor is equipped between the manipulator and the mobile base to measure the dynamic coupling terms. It is worth noticing that the force sensor also measures the gravity force and torque between the mobile base and the manipulator, which is compensated through the intrinsic stability of the mobile base. To better distinguish the dynamic coupling terms against gravity/torque from the manipulator and the mobile base, during the high-dynamic motion of the mobile base, the manipulator applies joint impedance control to maintain the joint position of $[1, 0.14, -1.45, 4.41, -1.35, 0 ]^T$.

The measured dynamic coupling terms are compared to predicted wrenches, i.e., the dynamic coupling terms from the proposed dynamic model. The comparison results between measurements and predictions are shown in Fig. \ref{verification}. Discrepancies are quantified using root mean squared error (RMSE), and mean absolute error (MAE) in the $x$ and $y$ directions, which are $2.71, 3.38$, and $6.77, 8.19$, for $x$ and $y$ directions respectively. The findings substantiate the efficiency of the proposed model.

\subsection{Dynamic Interaction Results}
Ablation studies are conducted to verify the proposed motion/force control approach's ability to handle dynamic coupling terms and other unmodeled uncertainties. The mobile manipulator is designed to perform a wall-wiping task. The wall is at $y=0.8m$, extending parallel along the x-axis.
In the first setting, under low-dynamic motion conditions, the mobile base moves forward along the wall at a speed of $0.2$ m/s, which causes random changes in the movement of the mobile base and leads to undesired motions of the manipulator. Meanwhile, the manipulator is tasked to maintain continuous contact with the wall while following a sinusoidal trajectory $x_{3,d}(t) = 0.8 + 0.1 sin(0.125 \pi (t-20))m$ along the z-axis and applying forces of $5 N$ and $10 N$ on the wall, respectively.

Furthermore, in the second setting, to further examine the impact of significant motion changes of the mobile base on the proposed control approach, the mobile base follows a high-dynamic sinusoidal trajectory on the y-axis, which simulates continuous dynamic couplings. The actual motion of the mobile base is illustrated in Fig. \ref{sim2_car}.
\begin{figure}[!t]
        \centering
        \subfloat[\label{aa}]{\includegraphics[width=\linewidth]{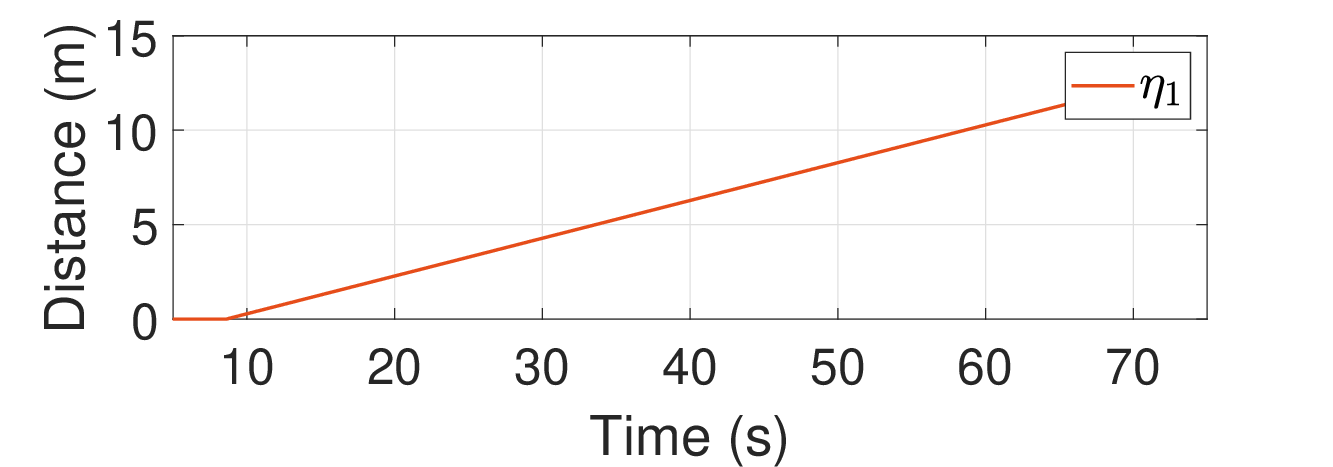}}\\
        \subfloat[\label{bb}]{\includegraphics[width=\linewidth]{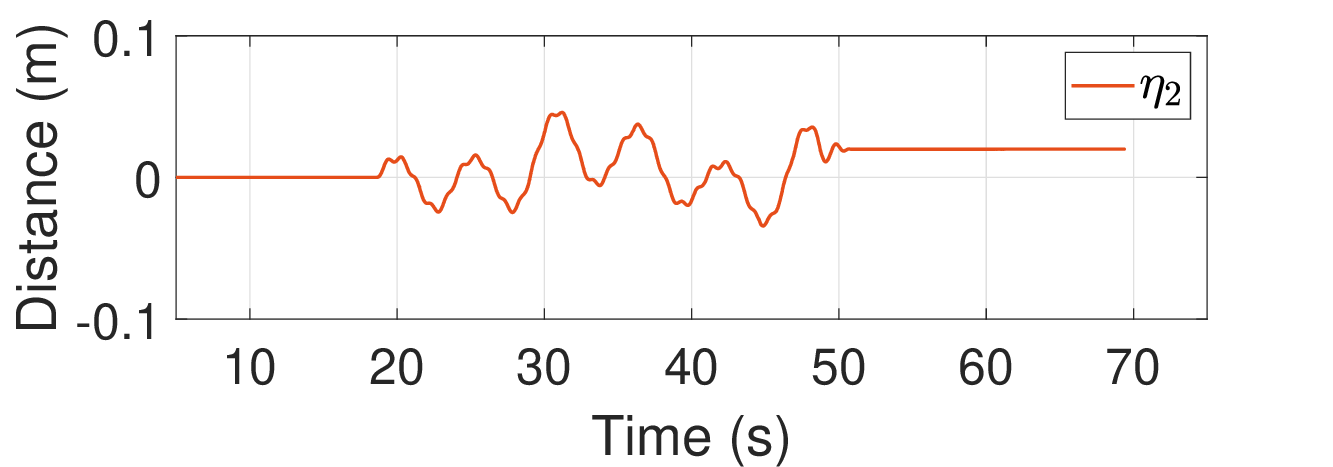}}\\
        \subfloat[\label{cc}]{\includegraphics[width=\linewidth]{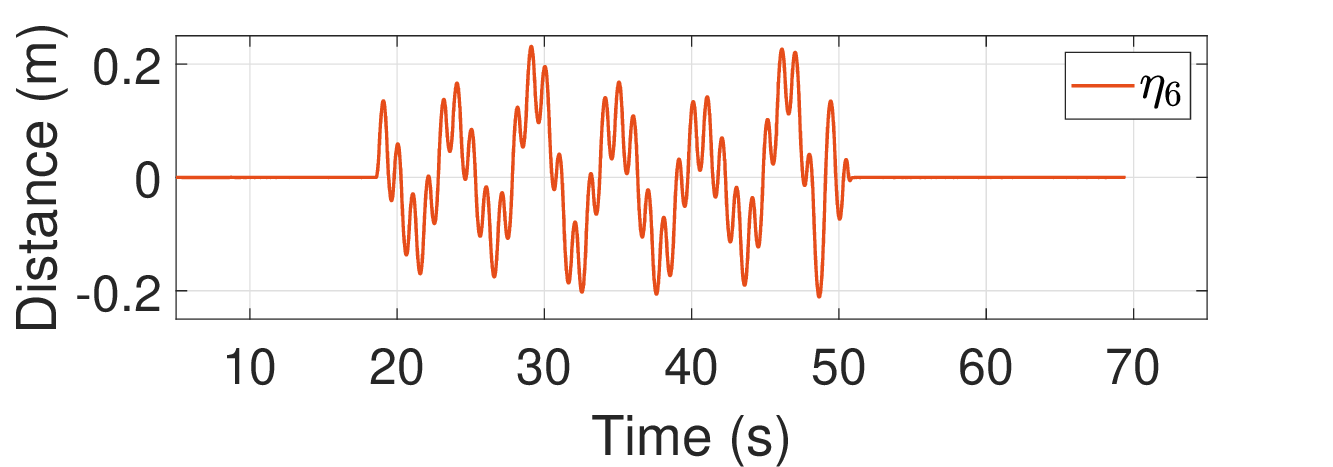}}
	\caption{The high-dynamic forward $\eta_1$, lateral $\eta_2$, and yaw $\eta_6$ motion trajectory of the mobile base. }
	\label{sim2_car}
 \end{figure}
 
\begin{figure}
    \centering
    \subfloat[\label{a}]{\includegraphics[width=\linewidth]{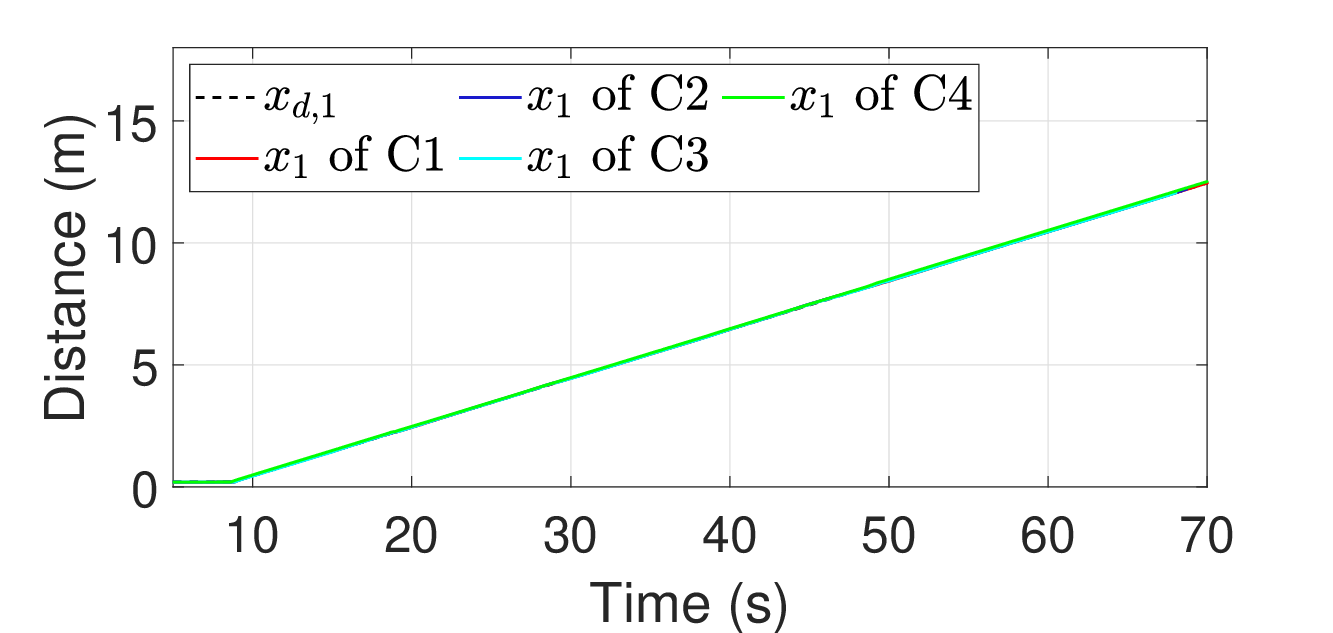}}\\
    \subfloat[\label{b}]{\includegraphics[width=\linewidth]{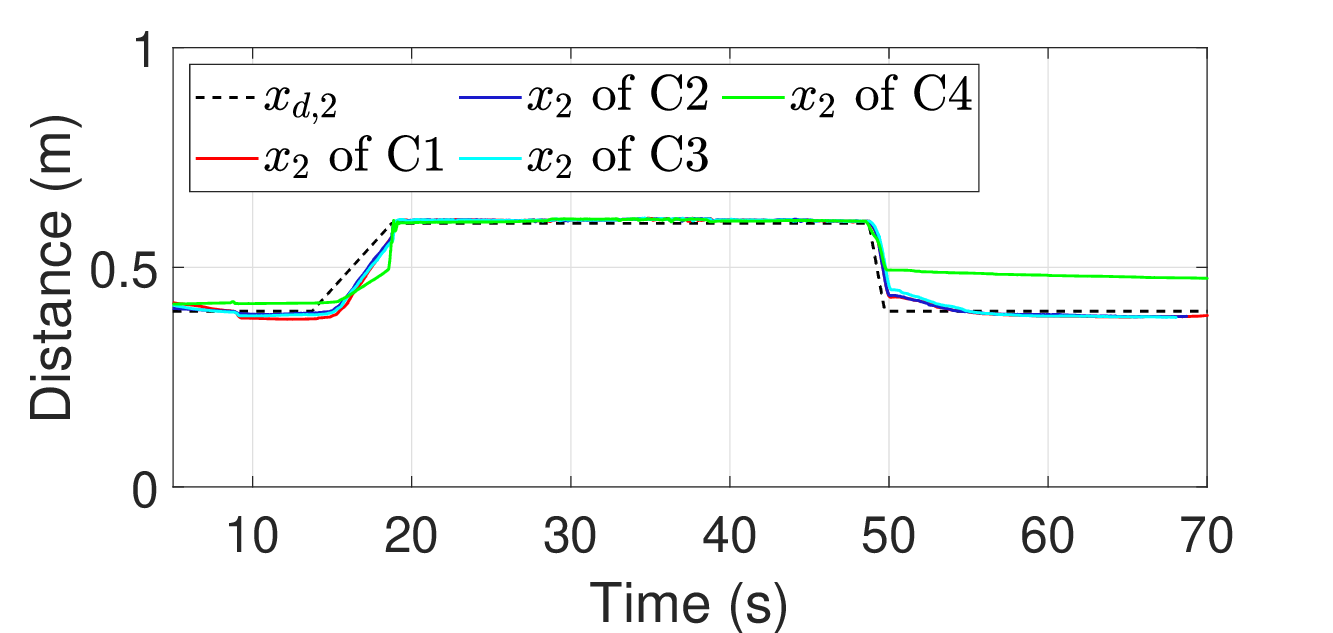}}\\
    \subfloat[\label{c}]{\includegraphics[width=\linewidth]{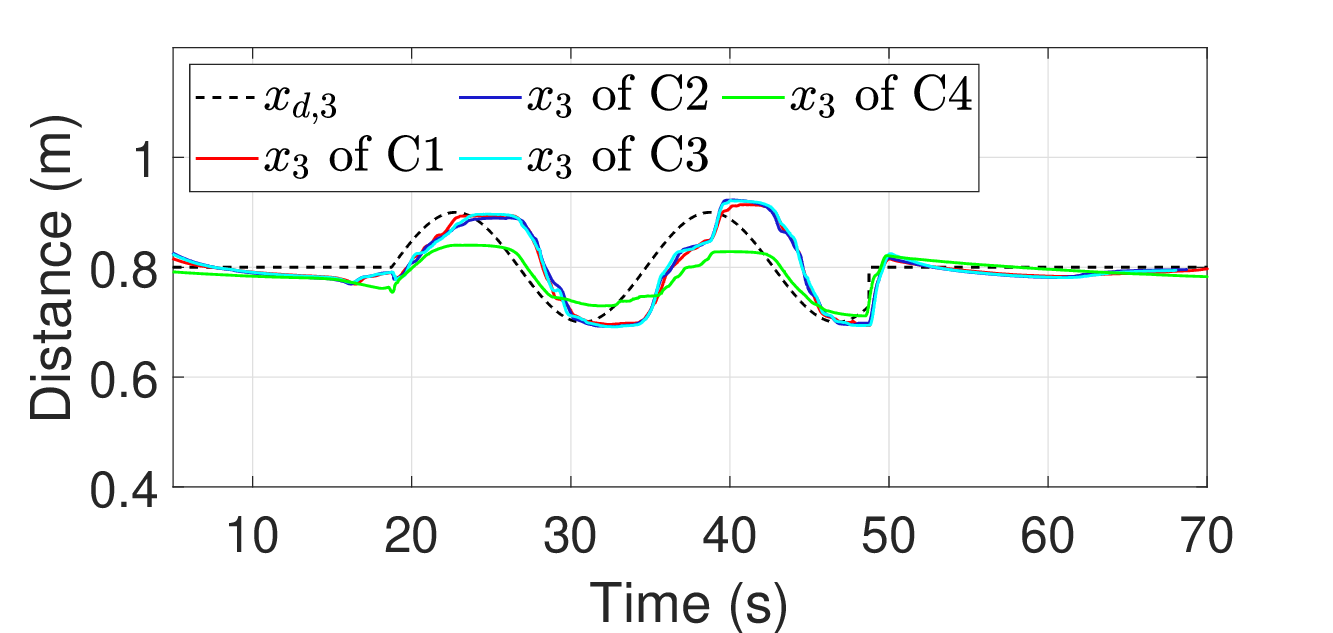}}
    \caption{Comparative motion tracking results with the mobile base in low-dynamic motion.}
    \label{sim1_motion}
\end{figure}

\begin{figure}
    \centering
    \includegraphics[width=\linewidth]{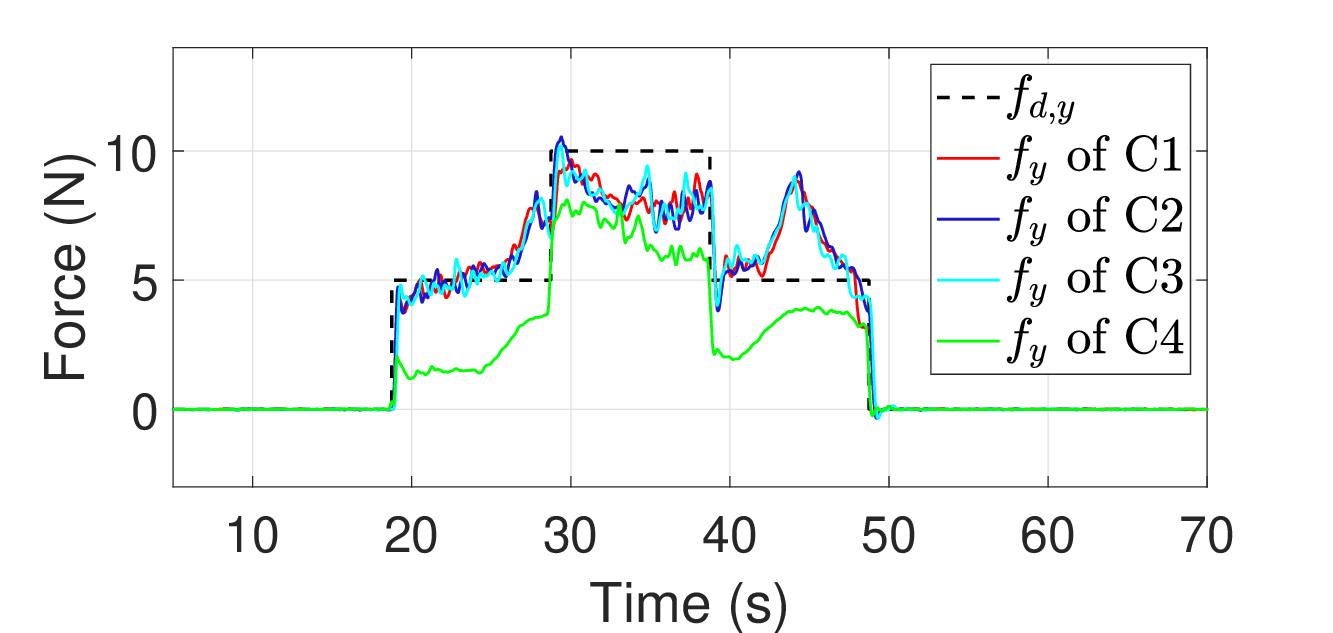}
    \caption{Comparative force tracking results with the mobile base in low-dynamic motion.}
    \label{sim1_force}
\end{figure}

\begin{table*}[!ht]
\renewcommand\arraystretch{1.25}
\centering
\caption{Average force tracking performance of four control schemes in dynamic interaction simulations. Each simulation is conducted 10 times to achieve more accurate results. The contents in brackets indicate the percentage improvement compared to the metrics of C4 (the first row).}
\label{sim_table}
\scalebox{0.9}{
    \begin{tabular}{ccccccc}
    \toprule
    & \multicolumn{3}{c}{{\underline{\hbox to 6mm{}Simulation with low-dynamic motion\hbox to 6mm{}}} } & \multicolumn{3}{c}{{\underline{\hbox to 6mm{}Simulation with high-dynamic motion\hbox to 6mm{}}} }             \\
    % \cline{2-4} \cline{5-7} \cline{8-10}
    Controller  & {RMSE}    & {MAE} & {SSE} & {RMSE} & {MAE} & {SSE} \\
    \midrule
    {C4} & 2.76 & 2.50 & 2.50 & 2.80 & 2.30 & 2.13 \\
    {C3} & 1.70 (38.4\%) & 1.34 (46.4\%) & 0.05 (98.0\%) & 2.08 (25.7\%) & 1.67 (27.4) & 0.24 (88.7\%) \\
    {C2} & \textbf{1.67 {(39.4\%)}} & \textbf{1.30 {(48.0\%)}} & 0.06 (97.6\%) & 2.00 (28.6\%) & 1.60 (30.4\%)& 0.23 (89.2\%) \\ 
    {C1} & 1.69 (38.8\%) & 1.33 (46.8\%) & \textbf{0.05 ({98.0\%})} & \textbf{1.77 {(36.8\%)}} & \textbf{1.42 {(38.3\%)}} & \textbf{0.15 {(93.0\%)}} \\
    \bottomrule
    \end{tabular}
}
\end{table*}

The dynamic interaction results with the mobile base in low-dynamic motion are shown in Fig.~\ref{sim1_motion} and \ref{sim1_force}. As shown in Fig.~\ref{sim1_motion}, when the end effector switches back to full motion control mode ($t> 55s$), motions of C1, C2, and C3 converge to the setpoint while motions of C4 deviate from the setpoint due to the static friction of the joints, which indicates the effectiveness of the UDE in eliminating steady-state errors (SSEs).

The quantitative evaluation of force tracking performance of four different controllers in simulation with the low-dynamic motion is shown on the left side of Table \ref{sim_table}. Force tracking performance is measured by three metrics: RMSE, MAE, and SSE. 
Controllers with UDE C1, C2, and C3 achieve better force tracking results than C4. Furthermore, since the mobile base does not operate large high-motion, the performance of C1, C2, and C3 remains consistent.

\begin{figure}
    \centering
    \subfloat[\label{aaa}]{\includegraphics[width=\linewidth]{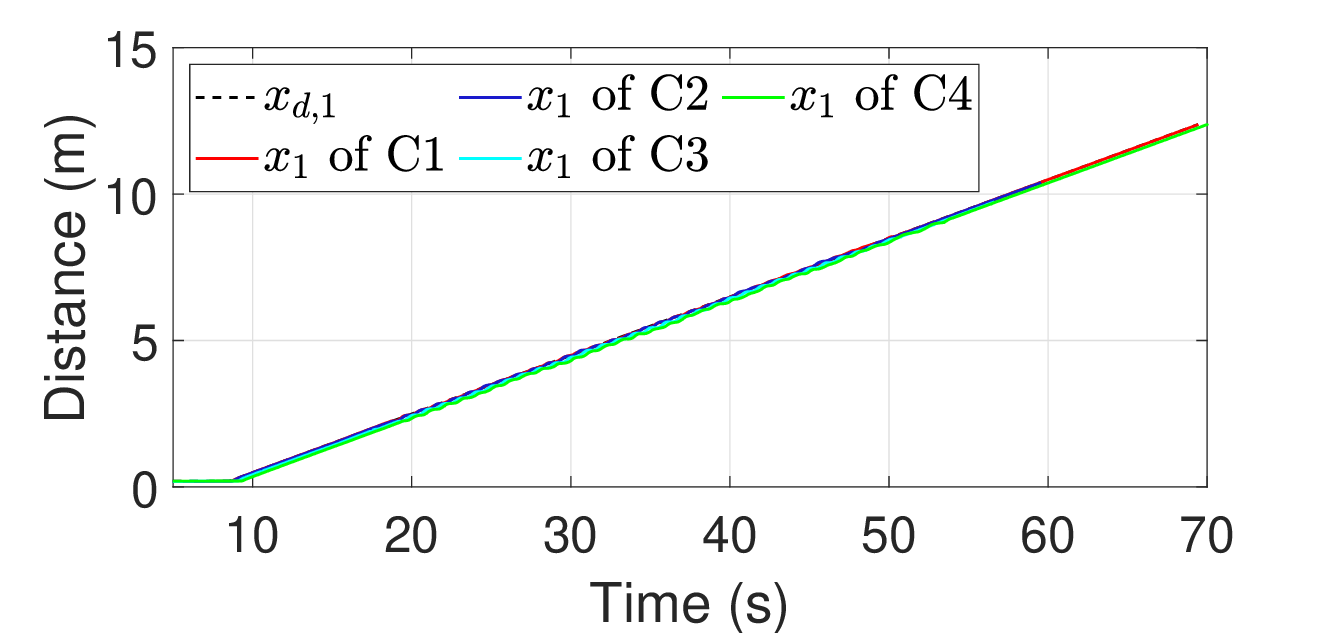}}\\
    \subfloat[\label{bbb}]{\includegraphics[width=\linewidth]{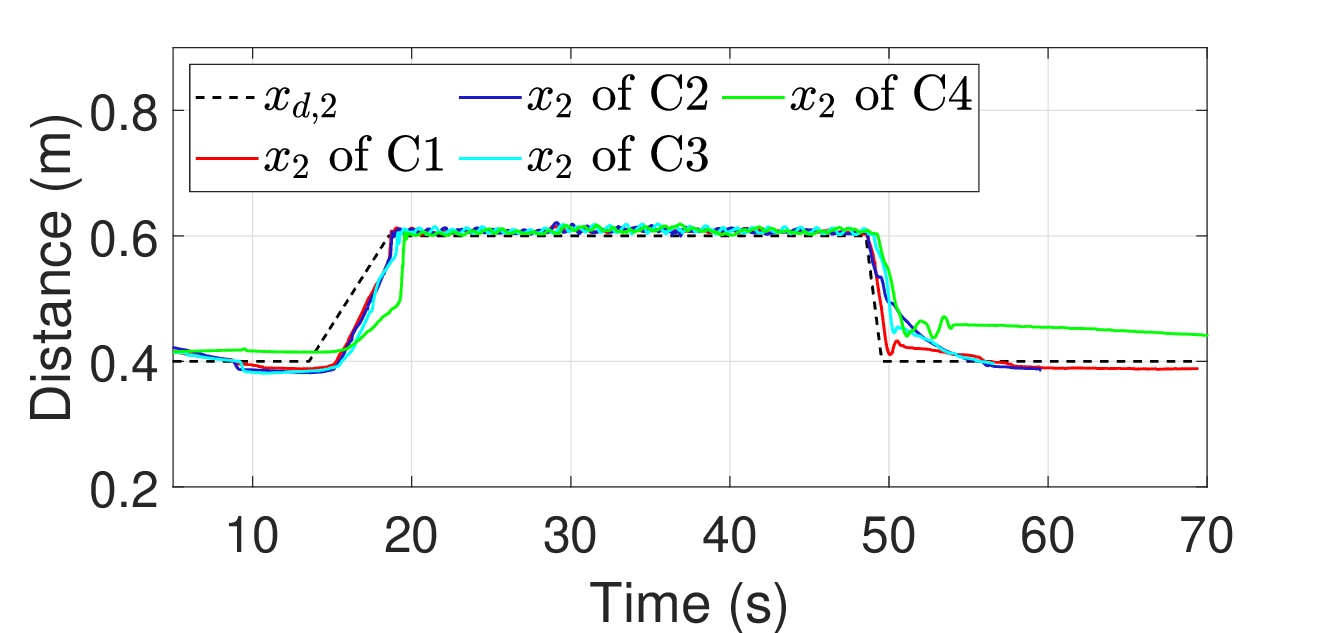}}\\
    \subfloat[\label{ccc}]{\includegraphics[width=\linewidth]{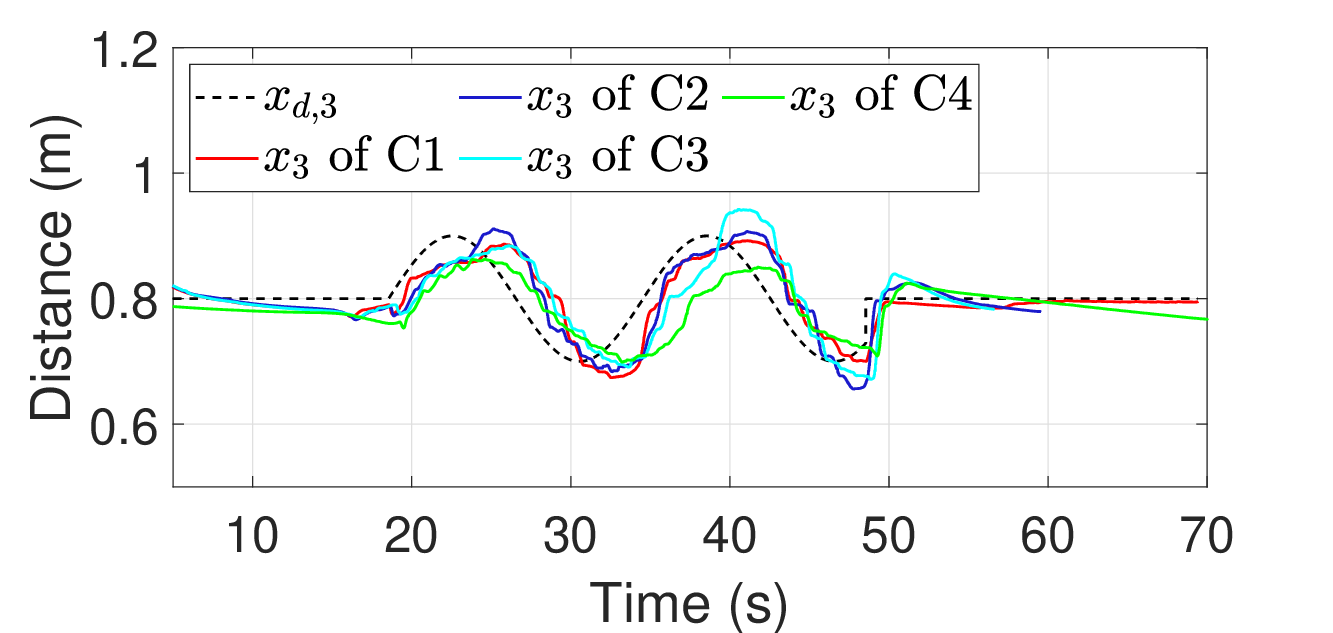}}
    \caption{Comparative motion tracking results with the mobile base in high-dynamic motion.}
    \label{sim2_motion}
\end{figure}

\begin{figure}
    \centering
    \includegraphics[width=\linewidth]{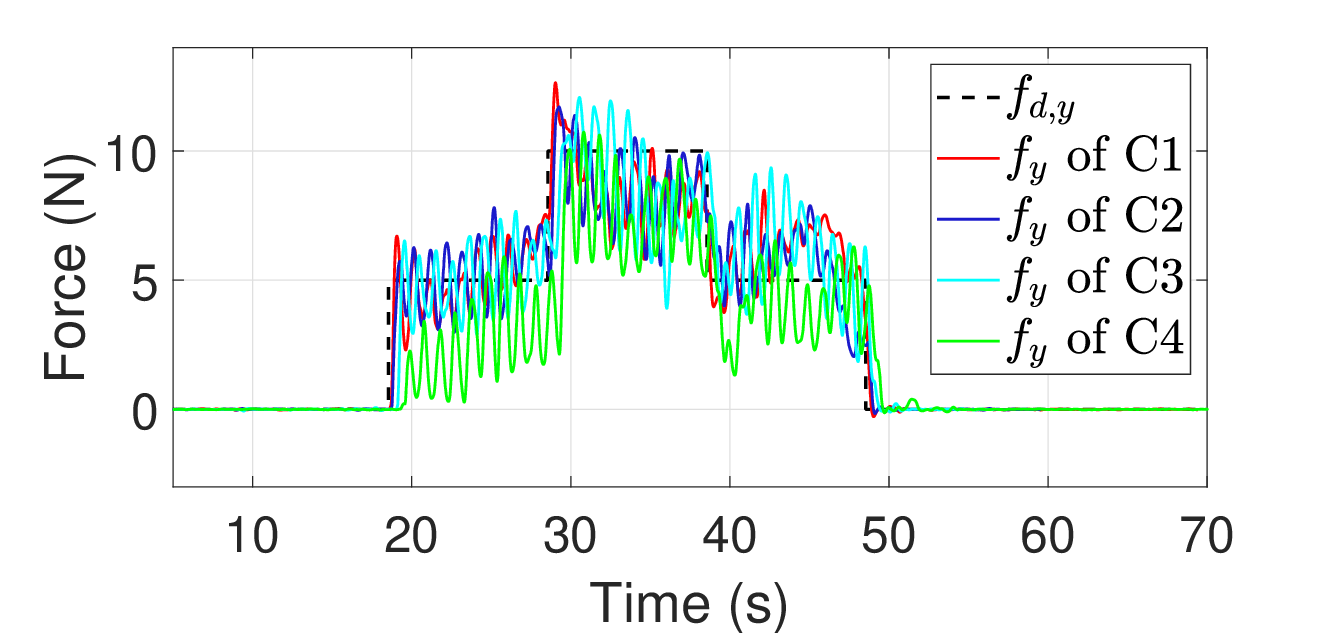}
    \caption{Comparative force tracking results with the mobile base in high-dynamic motion.}
    \label{sim2_force}
\end{figure}

The motion and force tracking results with the mobile base in high-dynamic motion are illustrated in Fig.~\ref{sim2_motion} and Fig.~\ref{sim2_force}. While significant dynamic movements of the mobile base have a negligible impact on the motion tracking performance of the end effector, the precision of force interaction is severely affected by dynamic disturbances. 

Quantitative results are presented on the right side of Table~\ref{sim_table}, which reveals that the performance of all controllers deteriorates due to the high-dynamic motion of the mobile base. Nevertheless, compared to other controllers, C1 demonstrates the superior force tracking performance of the end effector, with a clear hierarchy of performance: $C1 > C2 > C3>C4$. This indicates that the extended UDE, the proposed model, and the UDE all contribute positively to mitigating the dynamic coupling effects induced by the mobile base.
In addition, the comparison between simulation results with the mobile base in low-dynamic and high-dynamic motion demonstrates the deterioration in control performance due to dynamic coupling effects, as shown in Fig.~\ref{sim_decay}. The figure indicates that among all controllers, C1 experiences the least performance degradation, followed by C2 and C3, which demonstrates the effectiveness of the proposed approach. 

\begin{figure} 
    \centering
    \includegraphics[width=\linewidth]{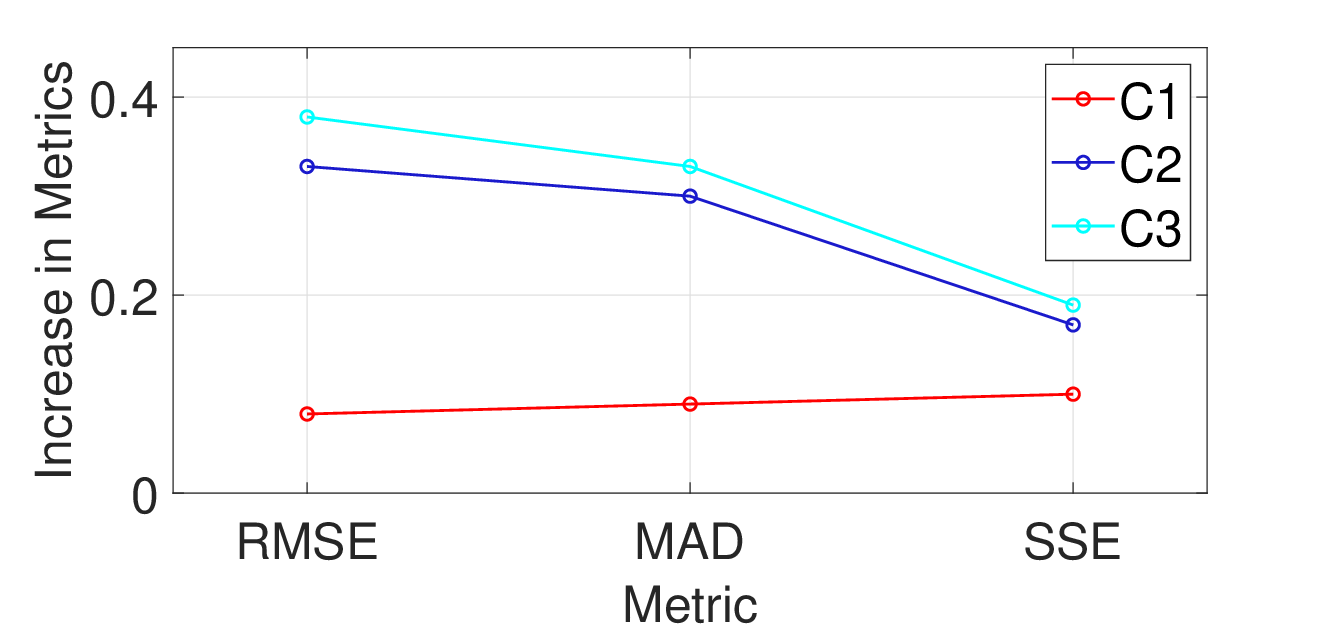}
    \caption{Performance degradations of different control approaches from low-dynamic to high-dynamic motion of the mobile base.}
  \label{sim_decay} 
\end{figure}

\subsection{Experiment Setup}
\begin{figure}[]
	\centering
	\includegraphics[scale=0.39]{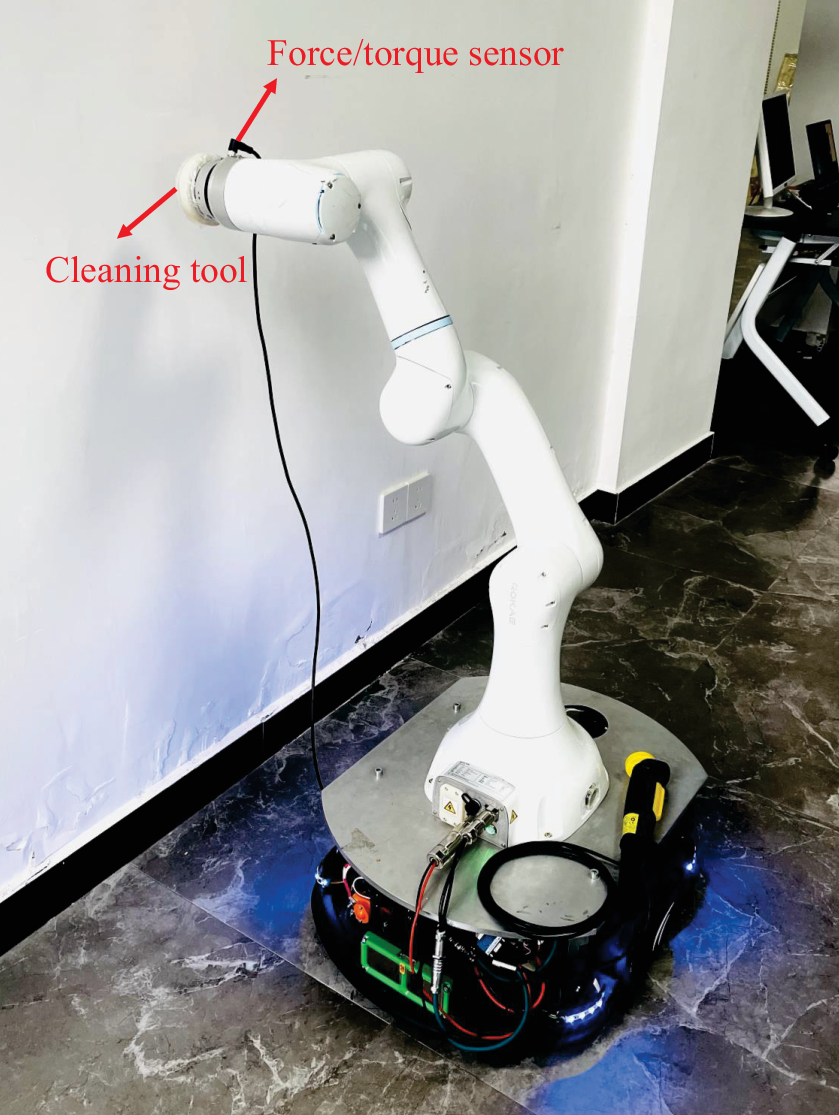}
	\caption{The prototype of the wall-cleaning mobile manipulator. The end effector is equipped with a 6 DOF force/torque sensor and a cleaning tool, and the mobile manipulator is tasked to follow a motion/force trajectory along the rigid wall.}
	\label{Experiment}
\end{figure}

\begin{figure*} 
    \centering
  \subfloat[\label{1a}]{%
       \includegraphics[width=0.5\linewidth]{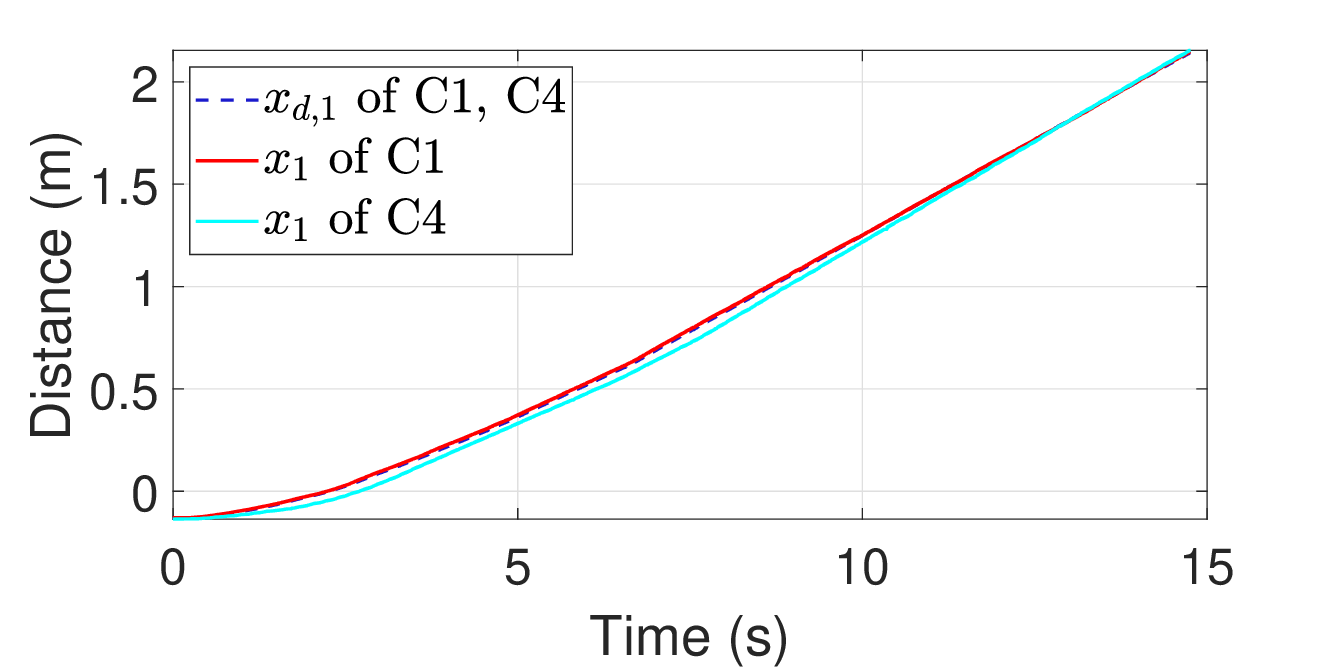}}
    \hfill
  \subfloat[\label{1b}]{%
        \includegraphics[width=0.5\linewidth]{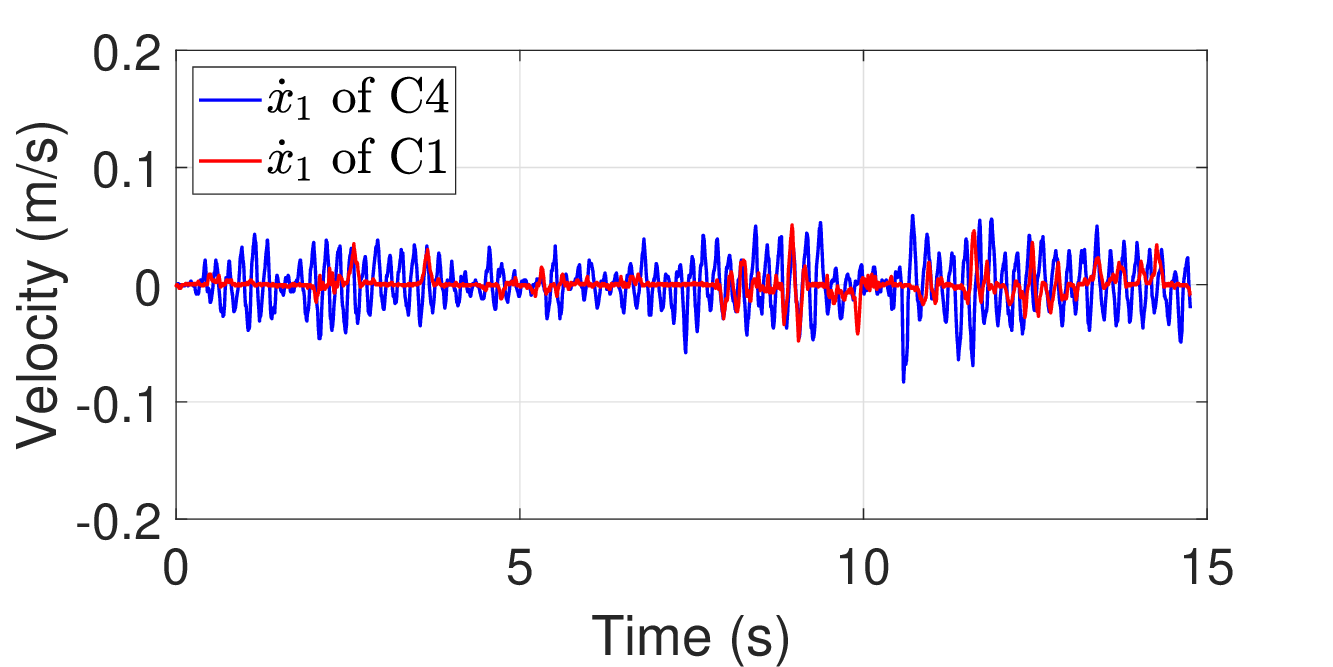}}
    \\
  \subfloat[\label{1c}]{%
        \includegraphics[width=0.5\linewidth]{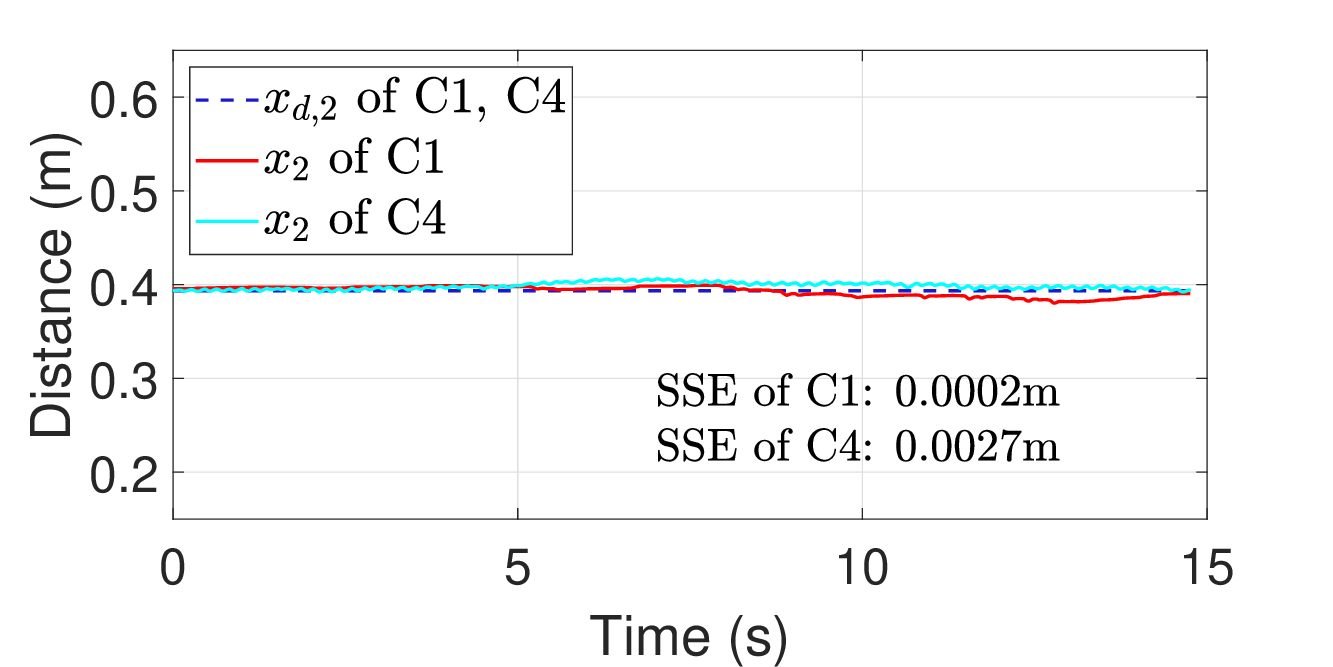}}
    \hfill
  \subfloat[\label{1e}]{%
        \includegraphics[width=0.5\linewidth]{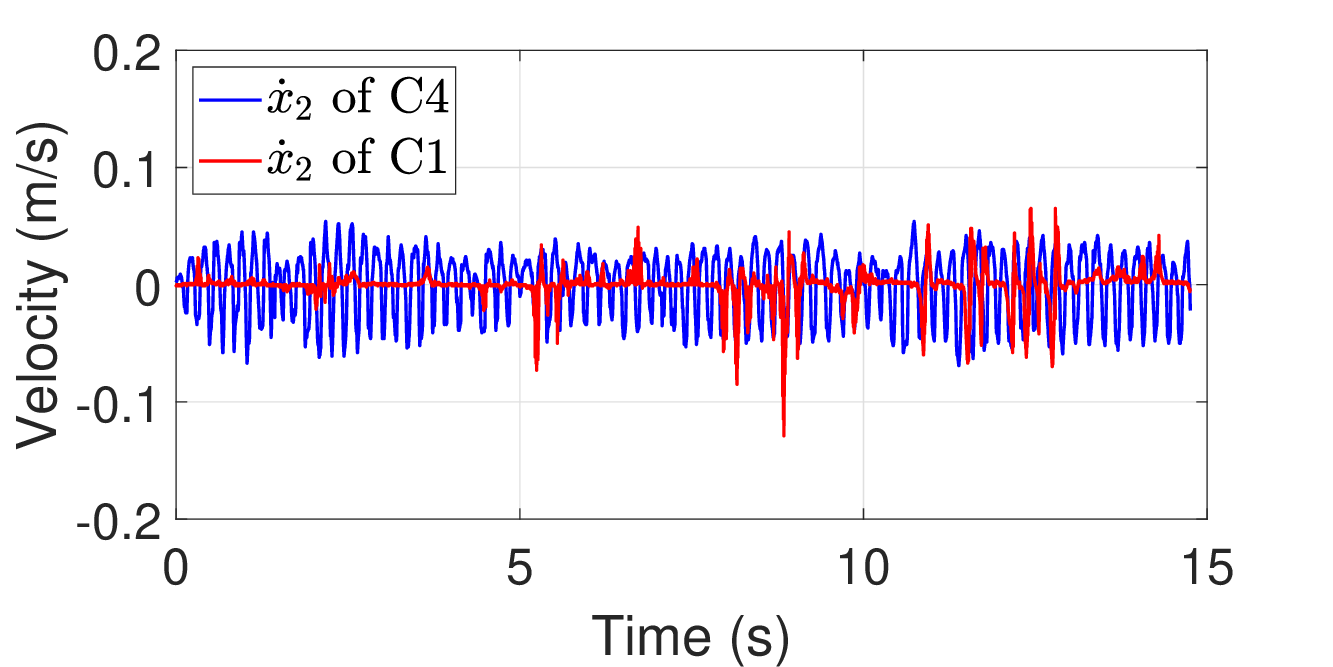}}
    \\
  \subfloat[\label{1f}]{%
        \includegraphics[width=0.5\linewidth]{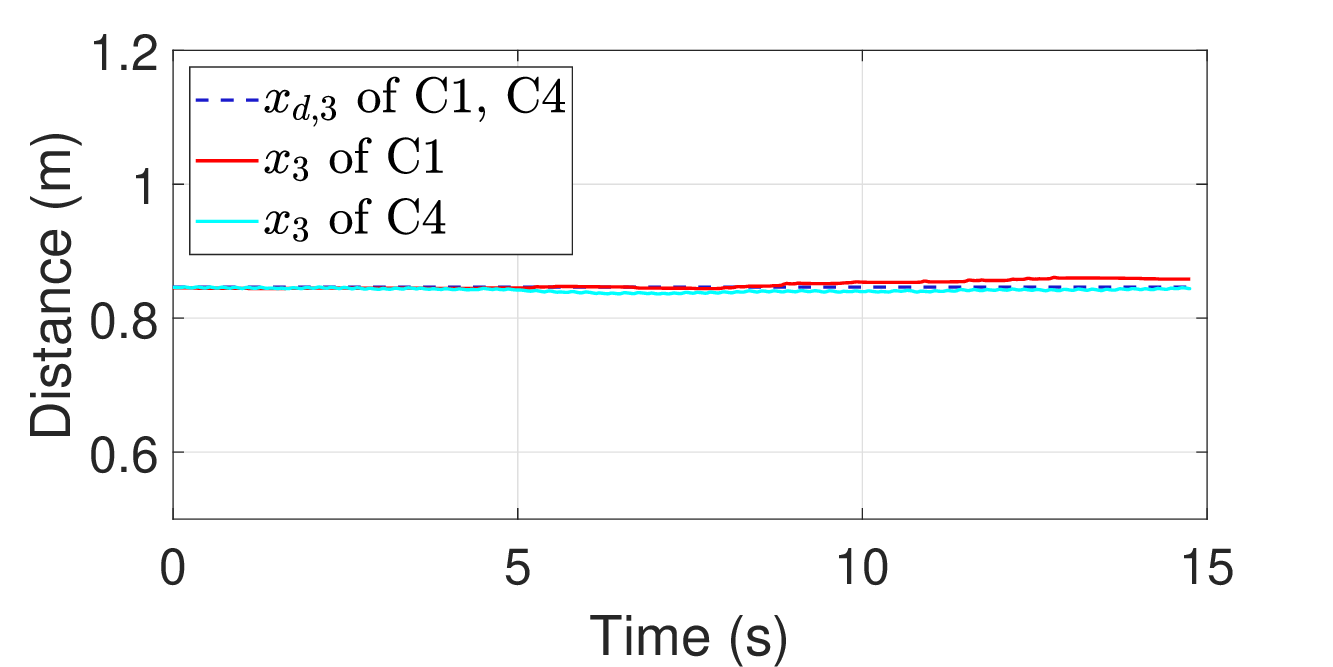}}
    \hfill
  \subfloat[\label{1g}]{%
        \includegraphics[width=0.5\linewidth]{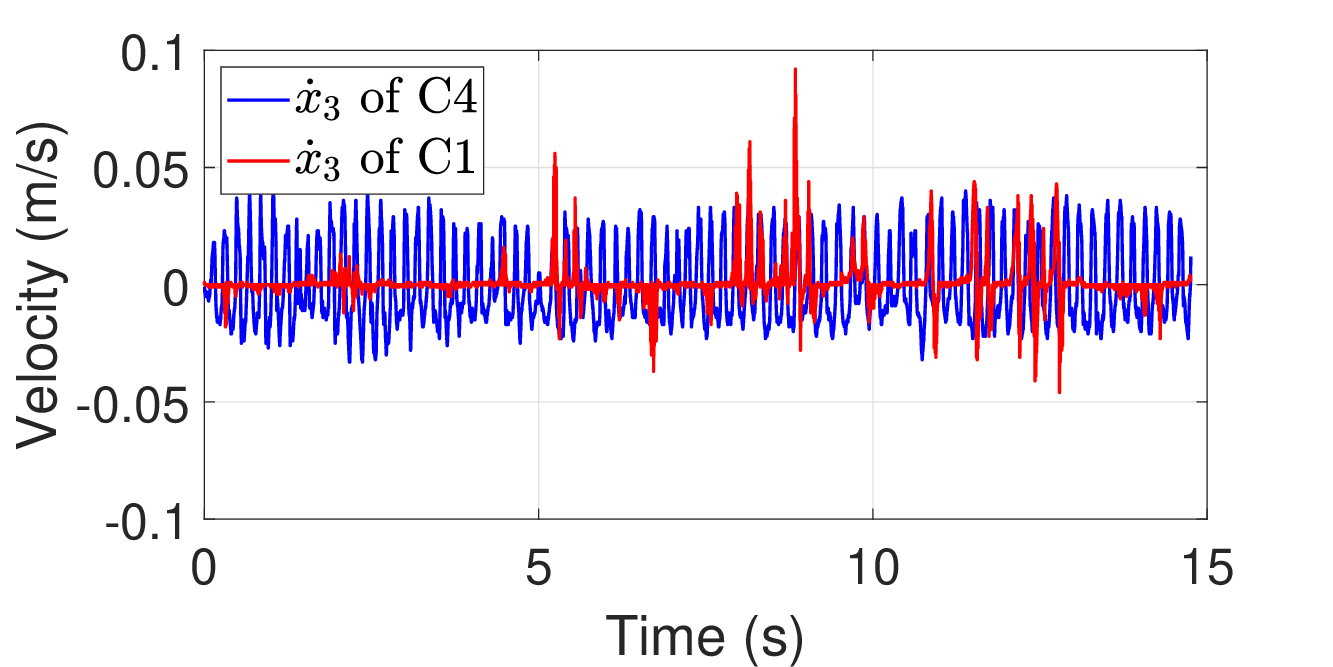}}
    \\
  \caption{Comparative motion tracking results of the experiment. (a), (c), (e) are the comparative results of position tracking between C1 and C4, while (b), (d), (f) are the comparative results of the velocity between C1 and C4.}
  \label{exp_pose} 
\end{figure*}

\begin{figure}[]
\centering
\includegraphics[width=\linewidth]{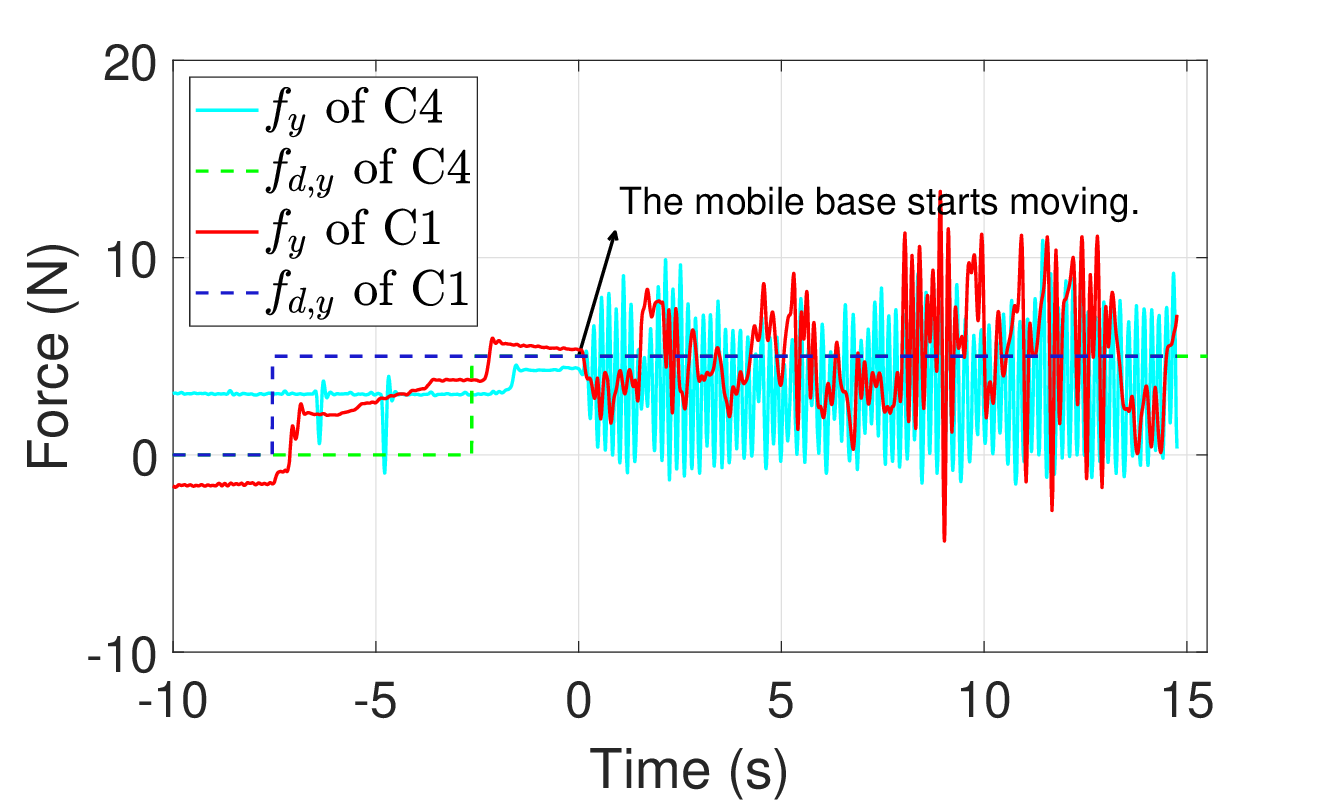}
\caption{Comparative results of force tracking between C1 and C4.}
\label{exp2_force_result}
\end{figure}

\begin{figure}[!t]
	\centering
	\includegraphics[width=\linewidth]{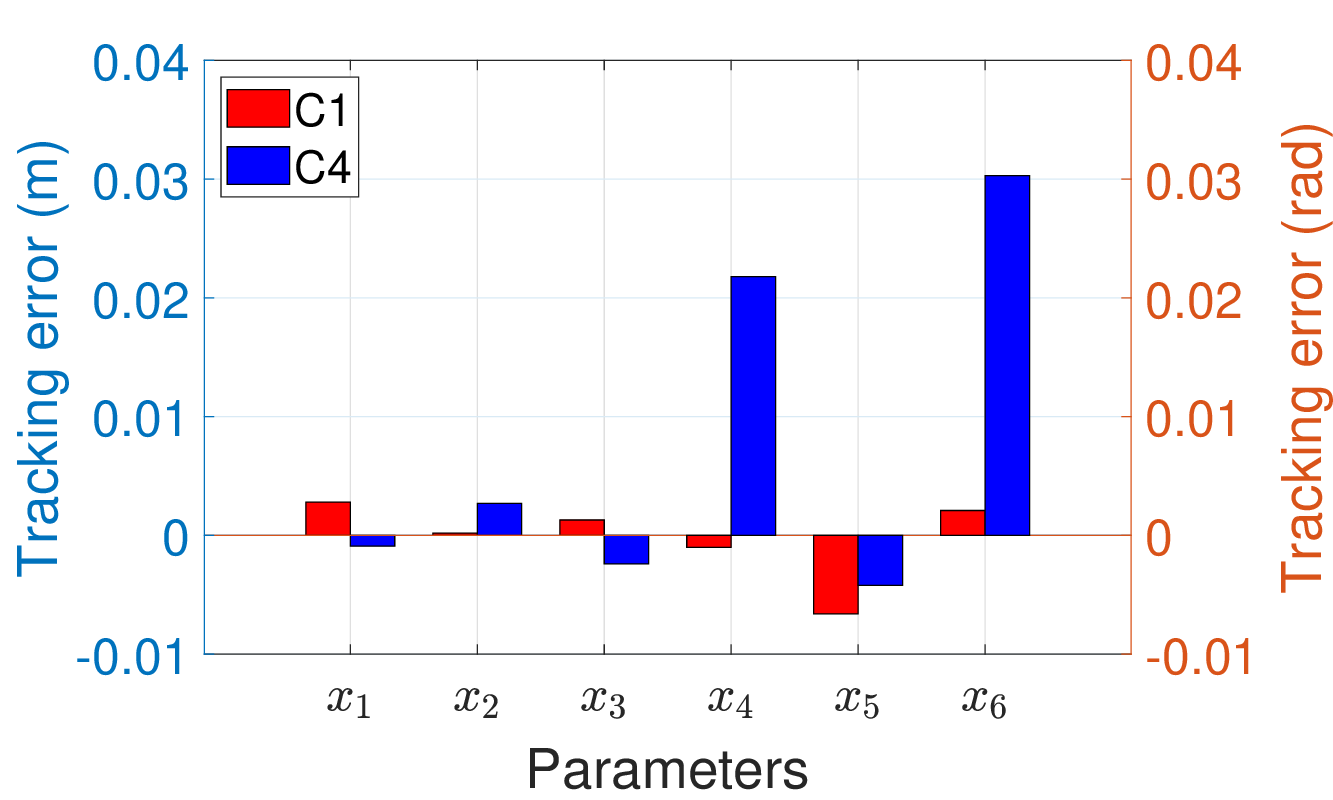}
	\caption{Motion tracking error of C1 and C4 in the experiment in full motion control.}
	\label{exp_mte}
\end{figure}

\begin{table*}[!t]
\renewcommand\arraystretch{1.25}
\centering
\caption{Force tracking performance of two controllers in the experiment. The contents in brackets indicate the percentage improvement compared to the metrics of C4 (the first row).}
\scalebox{0.9}{
\begin{tabular}{cccc}
\toprule
& \multicolumn{3}{c}{\underline{\hbox to 6mm{}Experiment\hbox to 6mm{}}} \\ 
Controller                        & {RMSE}    & {MAE} & {SSE}  \\
\midrule
{C4} & {8.3417} & {2.0913} & {1.4029} \\
{C1} & {\textbf{5.3426 {(35.95\%)}}} & {\textbf{1.9324 {(7.60\%)}}} & {\textbf{-0.0696 {(95.04\%)}}} \\
\bottomrule
\end{tabular}
}
\label{exp_table}
\end{table*}

To further validate the motion/force tracking performance of the proposed approach, experiments are conducted using a mobile manipulator with an Atien TT15 mobile base and a Rokae SR3 manipulator. The experiment platform is shown in Fig. \ref{Experiment}. The end effector is equipped with a 6-axis force sensor, and to replicate the practical working environment, a cleaning tool is mounted at the end effector of the mobile manipulator. In the experiment, the mobile manipulator is tasked to wipe the rigid wall and applies a force of $5N$ against the wall while the mobile base moves along it. The maximum torque output in the experiment is limited to $[6,6,3,2,2,2] N \cdot m$ for safety considerations.

For controller C1, extended UDE parameters are selected as $G_{f1}[i] = \frac{108s}{s^2+8.485s+36}$ and $G_{f2}[i] = \frac{3}{s+3}$, where $i = 1,\cdots,6$. The impedance parameters remain $ K_d = [25, 25, 25, 2.5, 2.5, 2.5]$, $D_d = [10, 10, 10, 1, 1, 1]$, $K_{f,d} = [0, 1, 0, 0, 0, 0]$. At $t=10s$, the values of $f_{d,y}$ are gradually changed by a ramp function from $0N$ to $5N$. After changing the desired force, the mobile base starts to move along the wall at a forward speed of $0.16 m/s$.  

\subsection{Experimental Results and Analysis}
The motion tracking results of C1 and C4 are shown in Fig. \ref{exp_pose} and Fig. \ref{exp_mte}. 
These metrics indicate that C1 exhibits commendable motion tracking capabilities compared to C4 by successfully keeping all tracking errors within 0.01 meters and radians.

The force tracking results are shown in Fig. \ref{exp2_force_result} and Table \ref{exp_table}. For the force tracking performance, notably, C1 almost eliminates the SSE (-0.0696$N$), while the SSE of C4 is 1.4029$N$. During the movement of the mobile base, C4 exhibits oscillations, whereas C1 effectively mitigates this behavior. These observations imply that the proposed approach is proficient in compensating for the dynamic coupling and other unmodeled uncertainties during the mobile base's movement.

Moreover, the RMSE and MAE of C1 improve by approximately 35.95\% and 7.60\% compared to C4. The substantial improvement in RMSE suggests that C1 is particularly effective in reducing the magnitude of more significant errors. The slight improvement in MAE indicates that while C1 also reduces the average error, the improvement is not significant when considering minor deviations. This indicates that C1 is adept at minimizing errors within a smaller range. 

\subsection{Discussion}
The simulation and experiment results reveal that the proposed control approach C1 provides a more robust and effective REI performance for mobile manipulators compared to C2, C3, and C4.
Specifically, it effectively compensates for critical real-world uncertainties, including modeling inaccuracies, friction forces, and dynamic couplings, which usually hinder precise interaction control. This capability is evident from the reduction in tracking errors from C4 to C1, as shown in Table \ref{sim_table} and Table \ref{exp_table}.
 
However, it is important to note that environment stiffness and damping were set to mimic compliant materials in the simulations. Under such conditions, the equilibrium point of interaction between the mobile manipulator and the environment shifts. This is evident in Fig. \ref{sim1_motion} (b) and Fig. \ref{sim2_motion} (b), where the position $x_2$ of the end effector exceeds $x_{2,d}$, and a noticeable deviation of the actual force occurs from the tracked value in Fig. \ref{sim1_force} and Fig. \ref{sim2_force}. Conversely, in the experiment, the mobile manipulator is tasked to wipe the rigid wall, and the environment parameters are more substantial, ensuring that the force trajectory remains closely aligned with the set course.
This reveals a limitation of the proposed approach: Deviations become pronounced when interacting with soft environments, where impedance mismatches lead to force control errors. Integrating adaptive impedance strategies can further enhance interaction stability and precision, especially for applications involving compliant or deformable surfaces.

\section{Conclusion}
In this paper, a dynamic motion/force control approach via extended UDE is proposed for the dynamic interaction of mobile manipulators. The extended UDE is designed to estimate dynamic coupling terms and other unmodeled uncertainties and incorporate them into the feedforward and feedback control loops, respectively.
The series of simulations and experiments of a wall-cleaning task validate the effectiveness of the proposed control approach. Ablation studies demonstrate that, with the dynamic coupling-integrated model, the proposed extended UDE significantly improves the motion/force tracking performance when the mobile base is in dynamic motion. 
Future work will focus on enhancing the system's interaction capabilities by integrating the perception of unknown environments and improving adaptability for interactions with compliant surfaces.

% \appendix
\section*{Declaration of competing interest}
The authors declare that they have no known competing financial interests or personal relationships that could have appeared to
influence the work reported in this paper.

\section*{Acknowledgment}
This work was supported by the National Natural Science Foundation, China (No.62088101)
; the InnoHK of the Government of the Hong Kong SAR via the Hong Kong Centre for Logistics Robotics.
% \section{My Appendix}
% Appendix sections are coded under \verb+\appendix+.

\printcredits

%% Loading bibliography style file
%\bibliographystyle{model1-num-names}
\bibliographystyle{cas-model2-names}
\bibliography{main}

\end{document}